\def\eqref#1{equation~\ref{#1}}
\def\1{\bm{1}}
\DeclareMathAlphabet{\mathsfit}{\encodingdefault}{\sfdefault}{m}{sl}
\SetMathAlphabet{\mathsfit}{bold}{\encodingdefault}{\sfdefault}{bx}{n}
\renewcommand{\eqref}[1]{(\ref{#1})}
\title{QPLEX: Duplex Dueling Multi-Agent \\ Q-Learning}
\author{%
	Jianhao Wang\thanks{Equal contribution.}~~$^1$, Zhizhou Ren\footnotemark[1]~~$^1$, Terry Liu$^1$, Yang Yu$^2$, Chongjie Zhang$^1$ \\
	$^1$Institute for Interdisciplinary Information Sciences, Tsinghua University, China \\
	$^2$Polixir Technologies, China \\
	\texttt{\{wjh19, rzz16, liudr18\}@mails.tsinghua.edu.cn}\\
	\texttt{yuy@nju.edu.cn}\\
	\texttt{chongjie@tsinghua.edu.cn} \\
}
\begin{document}

\maketitle

\begin{abstract}
We explore value-based multi-agent reinforcement learning (MARL) in the popular paradigm of centralized training with decentralized execution (CTDE). CTDE has an important concept, \textit{Individual-Global-Max} (IGM) principle, which requires the consistency between joint and local action selections to support efficient local decision-making. However, in order to achieve scalability, existing MARL methods either limit representation expressiveness of their value function classes or relax the IGM consistency, which may suffer from instability risk or may not perform well in complex domains. This paper presents a novel MARL approach, called \textit{duPLEX dueling multi-agent Q-learning} (QPLEX), which takes a duplex dueling network architecture to factorize the joint value function. This duplex dueling structure encodes the IGM principle into the neural network architecture and thus enables efficient value function learning. Theoretical analysis shows that QPLEX achieves a complete IGM function class. Empirical experiments on StarCraft II micromanagement tasks demonstrate that QPLEX significantly outperforms state-of-the-art baselines in both online and offline data collection settings, and also reveal that QPLEX achieves high sample efficiency and can benefit from offline datasets without additional online exploration\footnote{Videos available at \url{https://sites.google.com/view/qplex-marl/}.}.
\end{abstract}

\section{Introduction}\label{sec:intro}


Cooperative multi-agent reinforcement learning (MARL) has broad prospects for addressing many complex real-world problems, such as sensor networks \citep{zhang2011coordinated}, coordination of robot swarms \citep{huttenrauch2017guided}, and autonomous cars \citep{cao2012overview}.
However, cooperative MARL encounters two major challenges of scalability and partial observability in practical applications. The joint state-action space grows exponentially as the number of agents increases. The partial observability and communication constraints of the environment require each agent to make its individual decisions based on local action-observation histories. To address these challenges, a popular MARL paradigm, called {\em centralized training with decentralized execution} (CTDE) \citep{oliehoek2008optimal,kraemer2016multi}, has recently attracted great attention, where agents' policies are trained with access to global information in a centralized way and executed only based on local histories in a decentralized way.

Many CTDE learning approaches have been proposed recently, among which value-based MARL algorithms \citep{sunehag2018value,rashid2018qmix,son2019qtran,wang2019learning} have shown state-of-the-art performance on challenging tasks, e.g., unit micromanagement in StarCraft~II
\citep{samvelyan2019starcraft}. To enable effective CTDE for multi-agent Q-learning, it is critical that the joint greedy action should be equivalent to the collection of individual greedy actions of agents, which is called the IGM (\textit{Individual-Global-Max}) principle \citep{son2019qtran}. This IGM principle provides two advantages: 1) ensuring the policy consistency during centralized training (learning the joint Q-function) and decentralized execution (using individual Q-functions) and 2) enabling scalable centralized training of computing one-step TD target of the joint Q-function (deriving joint greedy action selection from individual Q-functions). To realize this principle, VDN \citep{sunehag2018value} and QMIX \citep{rashid2018qmix} propose two sufficient conditions of IGM to factorize the joint action-value function. However, these two decomposition methods suffer from structural constraints and limit the joint action-value function class they can represent. As shown by \cite{wang2020ld}, the incompleteness of the joint value function class may lead to poor performance or potential risk of training instability in the offline setting \citep{levine2020offline}. Several methods have been proposed to address this structural limitation. QTRAN \citep{son2019qtran} constructs two soft regularizations to align the greedy action selections between the joint and individual value functions. WQMIX \citep{rashid2020weighted} considers a weighted projection that places more importance on better joint actions. However, due to computational considerations, both their implementations are approximate and based on heuristics, which cannot guarantee the IGM consistency exactly. Therefore, achieving the complete expressiveness of the IGM function class with effective scalability remains an open problem for cooperative MARL.



To address this challenge, this paper presents a novel MARL approach, called \textit{duPLEX dueling multi-agent Q-learning} (QPLEX), that takes a duplex dueling network architecture to factorize the joint action-value function into individual action-value functions. QPLEX introduces the dueling structure $Q=V+A$ \citep{wang2015dueling} for representing both joint and individual (duplex) action-value functions and then reformalizes the IGM principle as an \textit{advantage-based IGM}. This reformulation transforms the IGM consistency into the constraints on the value range of the advantage functions
and thus facilitates the action-value function learning with linear decomposition structure. Different from QTRAN and WQMIX \citep{son2019qtran,rashid2020weighted} losing the guarantee of exact IGM consistency due to approximation, QPLEX takes advantage of a duplex dueling architecture to encode it into the neural network structure and provide a guaranteed IGM consistency. To our best knowledge, QPLEX is the first multi-agent Q-learning algorithm that effectively achieves high scalability with a full realization of the IGM principle. 

We evaluate the performance of QPLEX in both didactic problems proposed by prior work \citep{son2019qtran,wang2020ld} and a range of unit micromanagement benchmark tasks in StarCraft II \citep{samvelyan2019starcraft}. In these didactic problems, QPLEX demonstrates its full representation expressiveness, thereby learning the optimal policy and avoiding the potential risk of training instability. Empirical results on more challenging StarCraft II tasks show that QPLEX significantly outperforms other multi-agent Q-learning baselines in online and offline data collections. It is particularly interesting that QPLEX shows the ability to support offline training, which is not possessed by other baselines. This ability not only provides QPLEX with high stability and sample efficiency but also with opportunities to efficiently utilize multi-source offline data without additional online exploration \citep{fujimoto2019off,fu2020d4rl,levine2020offline,yu2020mopo}.


\vspace{-0.05in}

\section{Preliminaries}

\vspace{-0.05in}

\subsection{Decentralized Partially Observable MDP (Dec-POMDP)}
We model a fully cooperative multi-agent task as a Dec-POMDP \citep{oliehoek2016concise} defined by a tuple $\mathcal{M} = \langle\mathcal{N},\mathcal{S},\mathcal{A},P, \mathcal{\varOmega}, O, r, \gamma \rangle$, where $\mathcal{N} \equiv \{1, 2, \dots, n\}$ is a finite set of agents and $s \in \mathcal{S}$ is a finite set of global states. At each time step, every agent $i \in \mathcal{N}$ chooses an action $a_i \in \mathcal{A}\equiv\{\mathcal{A}^{(1)}, 
\dots, \mathcal{A}^{(|\mathcal{A}|)}\}$ on a global state $s$, which forms a joint action $\bm{a} \equiv [a_i]_{i=1}^n\in\bm{\mathcal{A}}\equiv\mathcal{A}^n$. It results in a joint reward $r(s, \bm{a})$ and a transition to the next global state $s'\sim P(\cdot | s, \bm{a})$. $\gamma \in [0, 1)$ is a discount factor. 
We consider a \textit{partially observable} setting, where each agent $i$ receives an individual partial observation $o_i\in\mathcal{\varOmega}$ according to the observation probability function $O(o_i|s,a_i)$. 
Each agent $i$ has an action-observation history $\tau_i \in \mathcal{T}\equiv (\mathcal{\varOmega}\times \mathcal{A})^*$ and constructs its individual policy $\pi_i(a|\tau_i)$ to jointly maximize team performance.
We use $\bm{\tau} \in \bm{\mathcal{T}}\equiv\mathcal{T}^n$ to denote joint action-observation history. 
The formal objective function is to find a joint policy $\bm{\pi}=\langle \pi_1, \dots, \pi_n\rangle$ that maximizes a joint value function $V^{\bm{\pi}}(s)=\mathbb{E}\left[\sum_{t=0}^{\infty} \gamma^t r_t|s_0=s,\bm{\pi}\right]$.
Another quantity of interest in policy search is the joint action-value function $Q^{\bm{\pi}}(s,\bm{a})=r(s,\bm{a})+\gamma \mathbb{E}_{s'}[V^{\bm{\pi}}(s')]$. 

\subsection{Deep Multi-Agent Q-Learning in Dec-POMDP}
Q-learning algorithms is a popular algorithm to find the optimal joint action-value function $Q^*(s,\bm{a})=r(s,\bm{a})+\gamma \mathbb{E}_{s'}\left[\max_{\bm{a'}} Q^*\left(s',\bm{a'}\right)\right]$. 
Deep Q-learning represents the action-value function with a deep neural network parameterized by $\bm{\theta}$. Mutli-agent Q-learning algorithms \citep{sunehag2018value,rashid2018qmix,son2019qtran,yang2020qatten} use a replay memory $D$ to store the transition tuple $\left(\bm{\tau}, \bm{a}, r, \bm{\tau'}\right)$, where $r$ is the reward for taking action $\bm{a}$ at joint action-observation history $\bm{\tau}$ with a transition to $\bm{\tau'}$.
Due to partial observability, $Q\left(\bm{\tau},\bm{a};\bm{\theta}\right)$ is used in place of $Q\left(s,\bm{a};\bm{\theta}\right)$. Thus, parameters $\bm{\theta}$ are learnt by minimizing the following expected TD error:
\vspace{-0.03in}
\begin{align}\label{eq:FQI}
	\mathcal{L}(\bm{\theta}) = \mathbb{E}_{\left(\bm{\tau}, \bm{a}, r, \bm{\tau'}\right) \in D}\left[\left(r + \gamma V\left(\bm{\tau'};\bm{\theta^{-}}\right)-Q\left(\bm{\tau},\bm{a};\bm{\theta}\right)\right)^2\right],
\end{align}
where $V\left(\bm{\tau'};\bm{\theta^{-}}\right)=\max_{\bm{a'}}Q\left(\bm{\tau'},\bm{a'};\bm{\theta^{-}}\right)$ is the one-step expected future return of the TD target and $\bm{\theta^{-}}$ are the parameters of the target network, which will be periodically updated with $\bm{\theta}$.

\vspace{-0.1in}
\subsection{Centralized Training with Decentralized Execution (CTDE)}
CTDE is a popular paradigm of cooperative multi-agent deep reinforcement learning \citep{sunehag2018value,rashid2018qmix,wang2019influence,wang2020multi,wang2020rode,wang2020off}.
Agents are trained in a centralized way and granted access to other agents' information or the global states during the centralized training process. However, due to partial observability and communication constraints, each agent makes its own decision based on its local action-observation history during the decentralized execution phase. IGM \citep[\textit{Individual-Global-Max};][]{son2019qtran} is a popular principle to realize effective value-based CTDE, which asserts the consistency between joint and local greedy action selections in the joint action-value $Q_{tot}(\bm{\tau}, \bm{a})$ and individual action-values $\left[Q_i(\tau_i,a_i)\right]_{i=1}^n$:
\vspace{-0.06in}
\begin{align}\label{eq:igm}
\forall \bm{\tau} \in \bm{\mathcal{T}}, \mathop{\arg\max}_{\bm{a}\in\bm{\mathcal{A}}} Q_{tot}(\bm{\tau}, \bm{a}) = \left(\mathop{\arg\max}_{a_1\in\mathcal{A}} Q_1(\tau_1, a_1), \dots, \mathop{\arg\max}_{a_n\in\mathcal{A}} Q_n(\tau_n, a_n)\right).
\end{align}
\vspace{-0.2in}

Two factorization structures, \textbf{additivity} and \textbf{monotonicity}, has been proposed by VDN \citep{sunehag2018value} and QMIX \citep{rashid2018qmix}, respectively, as shown below:
\vspace{-0.06in}
\begin{align*}
	Q^{\text{VDN}}_{tot}(\bm{\tau}, \bm{a}) = \sum_{i=1}^n Q_i(\tau_i, a_i) \quad \text{ and }\quad  \forall i \in \mathcal{N}, \frac{\partial Q^{\text{QMIX}}_{tot}(\bm{\tau}, \bm{a})}{\partial Q_i(\tau_i, a_i)} > 0.
\end{align*}
\vspace{-0.2in}

Qatten \citep{yang2020qatten} is a variant of VDN, which supplements global information through a multi-head attention structure. It is known that, these structures implement sufficient but not necessary conditions for the IGM constraint, which limit the representation expressiveness of joint action-value functions \citep{mahajan2019maven}. There exist tasks whose factorizable joint action-value functions can not be represented by these decomposition methods, as shown in Section \ref{sec:experiments}. In contrast, QTRAN \citep{son2019qtran} transforms IGM into a linear constraint and uses it as soft regularization constraints. WQMIX \citep{rashid2020weighted} introduces a weighting mechanism into the projection of monotonic value factorization, in order to place more importance on better joint actions. However, these relaxations may violate the exact IGM consistency and may not perform well in complex problems.

\vspace{-0.1in}
\section{QPLEX: Duplex Dueling Multi-Agent Q-Learning}
\label{sec:QPLEX}


In this section, we will first introduce advantage-based IGM, equivalent to the regular IGM principle, and, with this new definition, convert the IGM consistency of greedy action selection to simple constraints on advantage functions. We then present a novel deep MARL model, called {\em duPLEX dueling multi-agent Q-learning algorithm} (QPLEX), that directly realizes these constraints by a scalable neural network architecture.

\vspace{-0.1in}
\subsection{Advantage-Based IGM}

To ensure the consistency of greedy action selection on the joint and local action-value functions, the IGM principle constrains the relative order of Q-values over actions. From the perspective of dueling decomposition structure $Q=V+A$ proposed by Dueling DQN \citep{wang2015dueling}, this consistency should only constrain the action-dependent advantage term $A$ and be free of the state-value function $V$. This observation naturally motivates us to reformalize the IGM principle as advantage-based IGM, which transforms the consistency constraint onto advantage functions.

\begin{restatable}[Advantage-based IGM]{definition}{defadvantagebasedIGM}\label{def:VD_plex}
	For a joint action-value function $Q_{tot}$: $\bm{\mathcal{T}} \times \bm{\mathcal{A}} \mapsto \mathbb{R}$ and individual action-value functions $[Q_i: \mathcal{T} \times \mathcal{A} \mapsto \mathbb{R}]_{i=1}^n$, where $\forall \bm{\tau} \in \bm{\mathcal{T}}$, $\forall \bm{a} \in \bm{\mathcal{A}}$, $\forall i \in \mathcal{N}$,
	\begin{align}\label{eq:two_dueling}
	\textbf{(Joint Dueling)}\quad &Q_{tot}(\bm{\tau}, \bm{a}) = V_{tot}(\bm{\tau}) + A_{tot}(\bm{\tau}, \bm{a}) \text{ and } V_{tot}(\bm{\tau}) = \max_{\bm{a'}} Q_{tot}(\bm{\tau}, \bm{a'}), \\\label{eq:two_dueling2}
	\textbf{(Individual Dueling)}\quad &Q_i(\tau_i, a_i) = V_i(\tau_i) + A_i(\tau_i, a_i) \text{ and } V_i(\tau_i) = \max_{a_i'}Q_i(\tau_i, a_i'),\vspace{-0.06in}
	\end{align}
	\vspace{-0.2in}
	
	such that the following holds
	\vspace{-0.1in}
	\begin{align}\label{VD_plex_eq}
	\mathop{\arg\max}_{\bm{a}\in\bm{\mathcal{A}}} A_{tot}(\bm{\tau}, \bm{a}) = \left(\mathop{\arg\max}_{a_1\in\mathcal{A}} A_1(\tau_1, a_1), \dots, \mathop{\arg\max}_{a_n\in\mathcal{A}} A_n(\tau_n, a_n)\right),
	\end{align}
	\vspace{-0.02in}
	then, we can say that $[Q_i]_{i=1}^n$ satisfies advantage-based IGM for $Q_{tot}$.
\end{restatable}
\vspace{-0.06in}
As specified in Definition \ref{def:VD_plex}, advantage-based IGM takes a duplex dueling architecture,
\textit{Joint Dueling} and \textit{Individual Dueling}, which induces the joint and local (duplex) advantage functions by $A=Q-V$. Compared with regular IGM, advantage-based IGM transfers the consistency constraint on action-value functions stated in Eq.~\eqref{eq:igm} to that on advantage functions. This change is an equivalent transformation because the state-value terms $V$ do not affect the action selection, as shown by Proposition \ref{prop:EquivalentFunctionCalss}.

\begin{restatable}{proposition}{EquivalentFunctionCalss} \label{prop:EquivalentFunctionCalss}
	The advantage-based IGM and IGM function classes are equivalent.
\end{restatable}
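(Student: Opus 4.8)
The plan is to show that for a fixed tuple $(Q_{tot}, [Q_i]_{i=1}^n)$ the ordinary IGM constraint \eqref{eq:igm} and the advantage-based IGM constraint \eqref{VD_plex_eq} are not merely equivalent but are literally the same system of equations, so that the two function classes coincide as sets. The entire argument rests on one elementary observation: subtracting a quantity that does not depend on the maximization variable leaves the $\argmax$ unchanged.

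First I would note that the dueling decompositions in \eqref{eq:two_dueling} and \eqref{eq:two_dueling2} introduce no additional degrees of freedom: given $Q_{tot}$ and $[Q_i]_{i=1}^n$, they force $V_{tot}(\bm{\tau}) = \max_{\bm{a'}} Q_{tot}(\bm{\tau}, \bm{a'})$ and $V_i(\tau_i) = \max_{a_i'} Q_i(\tau_i, a_i')$, and hence $A_{tot} = Q_{tot} - V_{tot}$ and $A_i = Q_i - V_i$. Thus the passage between the $Q$'s and the $A$'s is a bijection, and the IGM class and the advantage-based IGM class are indexed by the very same collection of tuples $(Q_{tot}, [Q_i]_{i=1}^n)$; only the form in which the consistency constraint is written differs.

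Next comes the key lemma. Fix $\bm{\tau}$. Since $V_{tot}(\bm{\tau})$ is constant with respect to $\bm{a}$, we have $\argmax_{\bm{a}} A_{tot}(\bm{\tau}, \bm{a}) = \argmax_{\bm{a}} \left[ Q_{tot}(\bm{\tau}, \bm{a}) - V_{tot}(\bm{\tau}) \right] = \argmax_{\bm{a}} Q_{tot}(\bm{\tau}, \bm{a})$, understood as an equality of sets so that ties are handled automatically. The identical reasoning, applied to each agent with $V_i(\tau_i)$ constant in $a_i$, yields $\argmax_{a_i} A_i(\tau_i, a_i) = \argmax_{a_i} Q_i(\tau_i, a_i)$ for every $i \in \mathcal{N}$.

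Finally I would substitute these $\argmax$ identities into \eqref{VD_plex_eq}: its left-hand side coincides with the joint $\argmax$ of $Q_{tot}$, and each coordinate of its right-hand side coincides with the individual $\argmax$ of $Q_i$, so \eqref{VD_plex_eq} becomes exactly \eqref{eq:igm}. Hence a tuple satisfies one constraint if and only if it satisfies the other, which establishes both inclusions simultaneously and proves the two function classes are equal. The only point demanding care is the set-valued reading of $\argmax$ under ties, but since the invariance holds at the level of sets this is not a genuine obstacle; the substance of the proposition is precisely that the state-value terms $V$ play no role in action selection.
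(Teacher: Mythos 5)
Your proposal is correct and follows essentially the same route as the paper's proof: both rest on the observation that subtracting the action-independent value terms $V_{tot}$ and $V_i$ leaves the (set-valued) $\argmax$ unchanged, so constraints \eqref{eq:igm} and \eqref{VD_plex_eq} are equivalent for the same tuple $(Q_{tot}, [Q_i]_{i=1}^n)$. The paper merely phrases this as two separate inclusions, constructing each class's element from the other by the identity map, whereas you state the biconditional directly; the content is identical.
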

\vspace{-0.03in}

One key benefit of using advantage-based IGM is that its consistency constraint can be directly realized by limiting the value range of advantage functions, as indicated by the following fact.
\begin{restatable}{fact}{VDplexAdvContrain}\label{fact:VDplexAdvContrain}
	The constraint of advantage-based IGM stated in Eq.~\eqref{VD_plex_eq} is equivalent to that when $\forall\bm{\tau} \in \bm{\mathcal{T}}$, $\forall \bm{a}^* \in \bm{\mathcal{A}}^*(\bm{\tau})$, $\forall \bm{a} \in \bm{\mathcal{A}} \setminus \bm{\mathcal{A}}^*(\bm{\tau})$, $\forall i \in \mathcal{N}$,
	\vspace{-0.03in}
	\begin{align}\label{eq:consistency}
	A_{tot}(\bm{\tau}, \bm{a}^*)=A_i(\tau_i, a_i^*)=0 \quad\text{and}\quad A_{tot}(\bm{\tau}, \bm{a})< 0, A_i(\tau_i, a_i) \le 0,
	\end{align}
	\vspace{-0.25in}
	
	where
	$\bm{\mathcal{A}}^*(\bm{\tau})=\left\{\bm{a} | \bm{a} \in \bm{\mathcal{A}}, Q_{tot}(\bm{\tau}, \bm{a}) = V_{tot}(\bm{\tau})\right\}$.
\end{restatable}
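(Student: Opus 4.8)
The plan is to exploit the single defining feature of the dueling decomposition in \eqref{eq:two_dueling}--\eqref{eq:two_dueling2}, namely that $V_{tot}(\bm{\tau})=\max_{\bm{a}'}Q_{tot}(\bm{\tau},\bm{a}')$ and $V_i(\tau_i)=\max_{a_i'}Q_i(\tau_i,a_i')$. First I would observe that $A_{tot}(\bm{\tau},\bm{a})=Q_{tot}(\bm{\tau},\bm{a})-V_{tot}(\bm{\tau})\le 0$ for every $\bm{a}$, with equality precisely when $\bm{a}\in\bm{\mathcal{A}}^*(\bm{\tau})$, and likewise $A_i(\tau_i,a_i)\le 0$ with equality exactly at the individual greedy actions. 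An immediate consequence is that $\argmax_{\bm{a}}A_{tot}(\bm{\tau},\bm{a})=\bm{\mathcal{A}}^*(\bm{\tau})$ and $\argmax_{a_i}A_i(\tau_i,a_i)=\{a_i:A_i(\tau_i,a_i)=0\}$, so both sides of the consistency equation \eqref{VD_plex_eq} are exactly the zero-level sets of the respective advantages. This reduces the Fact to a set-theoretic identity between $\bm{\mathcal{A}}^*(\bm{\tau})$ and the Cartesian product of the per-agent zero sets.

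With this reduction, I would prove the equivalence by matching each clause of \eqref{eq:consistency} to one side of that identity. For \eqref{eq:consistency}$\Rightarrow$\eqref{VD_plex_eq}, the hypotheses $A_{tot}(\bm{\tau},\bm{a}^*)=0$ and $A_i(\tau_i,a_i^*)=0$ say that every joint greedy action $\bm{a}^*$ is componentwise individually greedy, i.e.\ lies in $\prod_i\argmax_{a_i}A_i(\tau_i,a_i)$, which together with the characterization above yields \eqref{VD_plex_eq}. For the converse, \eqref{VD_plex_eq} forces each $\bm{a}^*\in\bm{\mathcal{A}}^*(\bm{\tau})$ to decompose into individual maximizers, giving $A_i(\tau_i,a_i^*)=0$, while the remaining clauses $A_{tot}(\bm{\tau},\bm{a})<0$ for $\bm{a}\notin\bm{\mathcal{A}}^*(\bm{\tau})$ and $A_i(\tau_i,a_i)\le 0$ follow for free from the non-positivity established in the first step; assembling these recovers \eqref{eq:consistency}.

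The step I expect to be the main obstacle is the bookkeeping at the boundary between the equality and inequality clauses, in particular justifying why a non-optimal joint action only obeys the non-strict bound $A_i(\tau_i,a_i)\le 0$ rather than a strict one: a jointly suboptimal action may still carry some locally optimal components, so strictness can be asserted for $A_{tot}$ alone. Keeping the strict bound on $A_{tot}$ aligned with the vanishing of $A_{tot}(\bm{\tau},\bm{a}^*)$, and matching the Cartesian-product structure of the right-hand side of \eqref{VD_plex_eq} to the per-agent conditions $A_i(\tau_i,a_i^*)=0$, is where I would take most care; the handling of ties among individual greedy actions is the subtle point underlying both inclusions.
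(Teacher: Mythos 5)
Your overall route is the same as the paper's: both arguments start from the non-positivity $A_{tot}\le 0$, $A_i\le 0$ forced by $V=\max_{a}Q$, identify $\argmax_{\bm{a}}A_{tot}(\bm{\tau},\cdot)$ with the zero-level set $\bm{\mathcal{A}}^*(\bm{\tau})$ and $\argmax_{a_i}A_i(\tau_i,\cdot)$ with $\mathcal{A}^*_i(\tau_i)=\{a_i\mid A_i(\tau_i,a_i)=0\}$, and reduce \eqref{VD_plex_eq} to the set identity $\bm{\mathcal{A}}^*(\bm{\tau})=\prod_{i}\mathcal{A}^*_i(\tau_i)$. The direction \eqref{VD_plex_eq}$\Rightarrow$\eqref{eq:consistency} is complete as you describe it.

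The gap is in the direction \eqref{eq:consistency}$\Rightarrow$\eqref{VD_plex_eq}. From the clauses $A_{tot}(\bm{\tau},\bm{a}^*)=A_i(\tau_i,a_i^*)=0$ you only obtain the inclusion $\bm{\mathcal{A}}^*(\bm{\tau})\subseteq\prod_i\mathcal{A}^*_i(\tau_i)$; the set equality also needs the reverse inclusion, and the remaining clauses of \eqref{eq:consistency} do not supply it. Concretely, take $n=2$, $\mathcal{A}=\{1,2\}$, $A_1(\tau_1,1)=A_1(\tau_1,2)=0$, $A_2(\tau_2,1)=0$, $A_2(\tau_2,2)=-1$, and $A_{tot}(\bm{\tau},\bm{a})=0$ if $\bm{a}=(1,1)$ and $-1$ otherwise. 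Every clause of \eqref{eq:consistency} holds (the non-optimal joint action $(2,1)$ only needs $A_1(\tau_1,2)\le 0$, which holds with equality), yet $\prod_i\mathcal{A}^*_i(\tau_i)=\{(1,1),(2,1)\}\neq\{(1,1)\}=\argmax_{\bm{a}}A_{tot}(\bm{\tau},\bm{a})$, so \eqref{VD_plex_eq} read as a set equality fails. You correctly flag ties among individual greedy actions as the delicate point, but the sketch does not resolve it, and it cannot be resolved from \eqref{eq:consistency} as written: one must either assume unique individual maximizers or add the closure condition that $A_i(\tau_i,a_i)=0$ for all $i$ implies $A_{tot}(\bm{\tau},\bm{a})=0$. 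The paper's own proof elides the same point (its closing remark that ``the closed set property of Cartesian product \dots\ has been encoded into'' the two conditions is doing exactly this work without justification), so your attempt is faithful to the paper, but neither argument closes this inclusion.
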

\vspace{-0.03in}

To achieve a full expressiveness power of advantage-based IGM or IGM, Fact \ref{fact:VDplexAdvContrain} enables us to develop an efficient MARL algorithm that allows the joint state-value function learning with any scalable decomposition structure and just imposes simple  constraints limiting value ranges of advantage functions. The next subsection will describe such a MARL algorithm.

\vspace{-0.1in}
\subsection{The QPLEX Architecture}\label{sec:sub_q_plex}

In this subsection, we present a novel multi-agent Q-learning algorithm with a duplex dueling architecture, called QPLEX, which exploits Fact \ref{fact:VDplexAdvContrain} and realizes the advantage-based IGM constraint. The overall architecture of QPLEX is illustrated in Figure \ref{fig:framework}, which consists of two main components as follows: (i) an \textit{Individual Action-Value Function} for each agent, and (ii) a \textit{Duplex Dueling} component that composes individual action-value functions into a joint action-value function under the advantage-based IGM constraint. During the centralized training, the whole network is learned in an end-to-end fashion to minimize the TD loss as specified in Eq.~\eqref{eq:FQI}. During the decentralized execution, the duplex dueling component will be removed, and each agent will select actions using its individual Q-function based on local action-observation history.


\begin{figure}
	\centering
	\vspace{-0.2in}
	\includegraphics[width=0.95\linewidth]{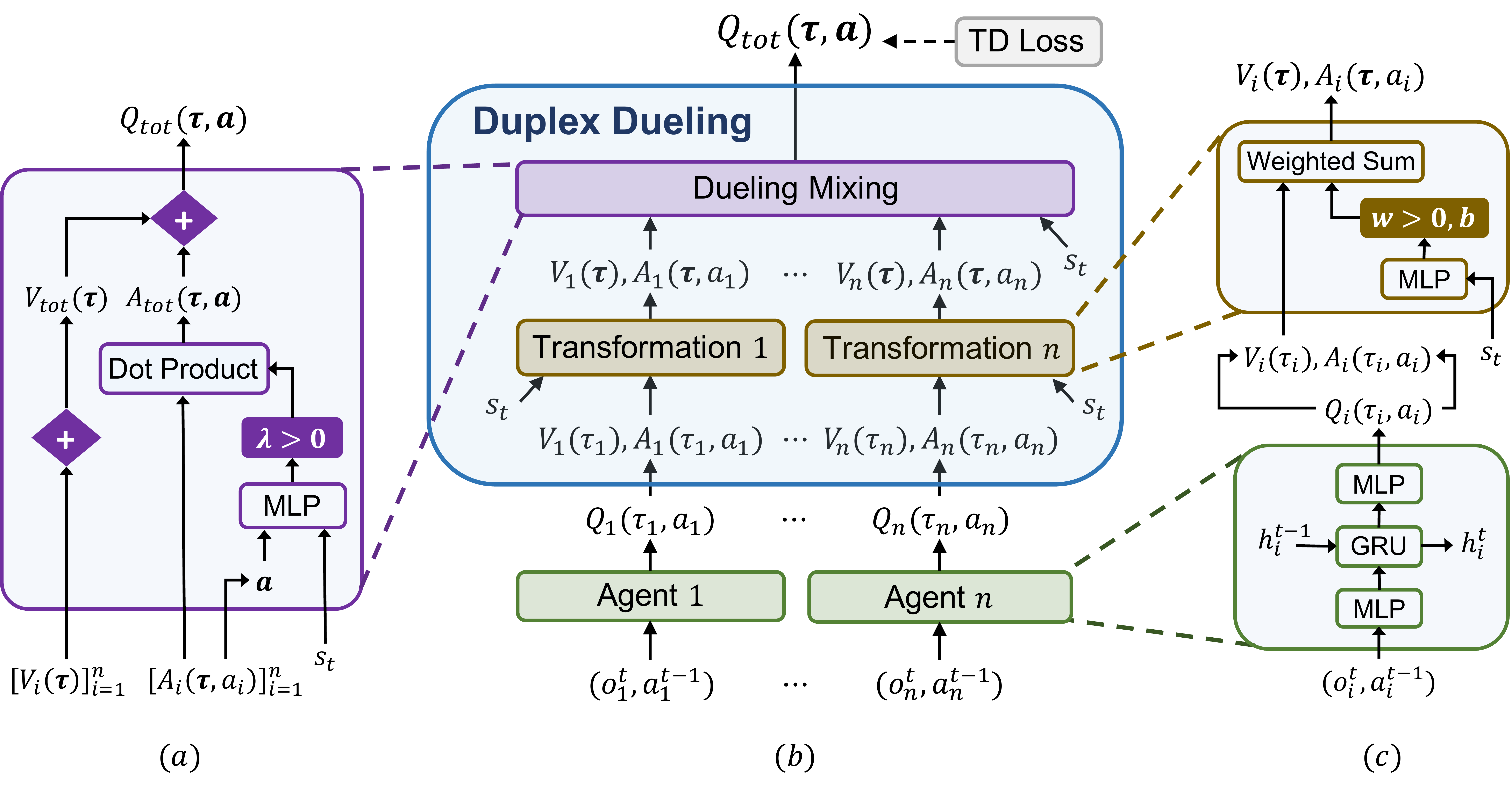}
	\caption{(a) The dueling mixing network structure. (b) The overall QPLEX architecture. (c) Agent network structure (bottom) and Transformation network structure (top). 
	}
	\label{fig:framework}
	\vspace{-0.2in}
\end{figure}


\textbf{Individual Action-Value Function} is represented by a recurrent Q-network for each agent $i$, which takes previous hidden state $h_i^{t-1}$, current local observations $o_i^t$, and previous action $a_i^{t-1}$ as inputs and outputs local $Q_i (\tau_i , a_i)$.

\textbf{Duplex Dueling} component connects local and joint action-value functions via two modules: (i) a \textit{Transformation} network module that incorporates the information of global state or joint history into individual action-value functions during the centralized training process, and (ii) a \textit{Dueling Mixing} network module that composes separate action-value functions from \textit{Transformation} into a joint action-value function. \textit{Duplex Dueling} first derives the individual dueling structure for each agent $i$ by computing its value function $V_{i}(\tau_i) = \mathop{\max}_{a_i} Q_{i}(\tau_i, a_i)$ and its advantage function $A_{i}(\tau_i, a_i) =Q_{i}(\tau_i, a_i)-V_{i}(\tau_i)$, and then computes the joint dueling structure by using individual dueling structures.

\textbf{Transformation} network module uses the centralized information to transform local dueling structure $\left[V_i(\tau_i),A_i(\tau_i , a_i)\right]_{i=1}^n$ to $\left[V_i(\bm{\tau}),A_i(\bm{\tau}, a_i)\right]_{i=1}^n$ conditioned on the joint action-observation history, as shown below, for any agent $i$, i.e., $Q_i(\bm{\tau}, a_i)=w_i(\bm{\tau})Q_i(\tau_i, a_i) + b_i(\bm{\tau})$, thus,
\begin{align}\label{eq:transformation}
V_i(\bm{\tau})=w_i(\bm{\tau})V_i(\tau_i) + b_i(\bm{\tau}), \quad \text{and} \quad A_i(\bm{\tau}, a_i)=Q_i(\bm{\tau}, a_i) - V_i(\bm{\tau}) = w_i(\bm{\tau})A_i(\tau_i, a_i),
\end{align} 
where $w_i(\bm{\tau}) > 0$ is a positive weight. This positive linear transformation maintains the consistency of the greedy action selection and alleviates partial observability in  Dec-POMDP \citep{son2019qtran, yang2020qatten}. As used by QMIX \citep{rashid2018qmix}, QTRAN \citep{son2019qtran}, and Qatten \citep{yang2020qatten}, the centralized information can be the global state $s$, if available, or the joint action-observation history $\bm{\tau}$.



\textbf{Dueling Mixing} network module takes the outputs of the transformation network as input, e.g., $[V_i,A_i]_{i=1}^n$, and produces the values of joint $Q_{tot}$, as shown in Figure \ref{fig:framework}a. 
This dueling mixing network uses individual dueling structure transformed by \textit{Transformation} to compute the joint value $V_{tot}(\bm{\tau})$ and the joint advantage $A_{tot}(\bm{\tau}, \bm{a})$, respectively, and finally outputs $Q_{tot}(\bm{\tau}, \bm{a}) = V_{tot}(\bm{\tau}) + A_{tot}(\bm{\tau}, \bm{a})$ by using the joint dueling structure.

Based on Fact \ref{fact:VDplexAdvContrain}, the advantage-based IGM principle imposes no constraints on value functions. Therefore, to enable efficient learning, we use a simple sum structure to compose the joint value:
\begin{align}\label{eq:Qtot}
V_{tot}(\bm{\tau}) = \sum_{i=1}^n V_i(\bm{\tau})
\end{align}

\vspace{-0.11in}
To enforce the IGM consistency of the joint advantage and individual advantages, as specified by Eq.~\eqref{eq:consistency}, QPLEX computes the joint advantage function as follows:
\begin{align}\label{eq:Atot}
A_{tot}(\bm{\tau}, \bm{a}) = \sum_{i=1}^n \lambda_i(\bm{\tau}, \bm{a}) A_i(\bm{\tau}, a_i), \text{ where } \lambda_i(\bm{\tau}, \bm{a}) > 0.
\end{align}

\vspace{-0.11in}
The joint advantage function $A_{tot}$ is the dot product of advantage functions $[A_i]_{i=1}^t$ and positive importance weights $[\lambda_i]_{i=1}^n$ with joint history and action. The positivity induced by $\lambda_i$ will continue to maintain the consistency flow of the greedy action selection and the joint information of $\lambda_i$ provides the full expressiveness power for value factorization. To enable efficient learning of importance weights $\lambda_i$ with joint history and action, QPLEX uses a scalable multi-head attention module \citep{vaswani2017attention}: 
\vspace{-0.12in}
\begin{align}\label{eq:lambda}
\lambda_i(\bm{\tau}, \bm{a})=\sum_{k=1}^K \lambda_{i, k}(\bm{\tau}, \bm{a})\phi_{i, k}(\bm{\tau})\upsilon_k(\bm{\tau}),\vspace{-0.11in}
\end{align}
where $K$ is the number of attention heads, $\lambda_{i, k}(\bm{\tau}, \bm{a})$ and $\phi_{i, k}(\bm{\tau})$ are attention weights activated by a sigmoid regularizer, and $\upsilon_k(\bm{\tau})> 0$ is a positive key of each head. 
This sigmoid activation of $\lambda_i$ brings sparsity to the credit assignment of the joint advantage function to individuals, which enables efficient multi-agent learning \citep{wang2019learning}.

With Eq.~\eqref{eq:Qtot} and~\eqref{eq:Atot}, the joint action-value function $Q_{tot}$ can be reformulated as follows: 
\begin{align}\label{eq:QplexJointQ}
Q_{tot}(\bm{\tau}, \bm{a}) =V_{tot}(\bm{\tau}) + A_{tot}(\bm{\tau}, \bm{a}) 
= \sum_{i=1}^n Q_i(\bm{\tau}, a_i) +\sum_{i=1}^n \left(\lambda_i(\bm{\tau}, \bm{a})-1\right) A_i(\bm{\tau}, a_i).\vspace{-0.11in}
\end{align}
It can be seen that $Q_{tot}$ consists of two terms. The first term is the sum of action-value functions $[Q_i]_{i=1}^n$, which is the joint action-value function $Q_{tot}^{\text{Qatten}}$ of Qatten \citep{yang2020qatten} (which is the $Q_{tot}$ of VDN \citep{sunehag2018value} with global information). The second term corrects for the discrepancy between the centralized joint action-value function and $Q_{tot}^{\text{Qatten}}$, which is the main contribution of QPLEX to realize the full expressiveness power of value factorization.
\begin{restatable}{proposition}{QplexIsVDplex}\label{QplexIsVDplex}
	Given the universal function approximation of neural networks, the action-value function class that QPLEX can realize is equivalent to what is induced by the IGM principle. 
\end{restatable}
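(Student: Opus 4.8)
The plan is to prove the claimed equality of function classes by two inclusions, after first collapsing the target class to a tractable form. Since Proposition~\ref{prop:EquivalentFunctionCalss} already identifies the IGM and advantage-based IGM classes, it suffices to show that the class QPLEX realizes equals the advantage-based IGM class of Definition~\ref{def:VD_plex}; and Fact~\ref{fact:VDplexAdvContrain} lets me replace the $\arg\max$ consistency of Eq.~\eqref{eq:consistency} by the pointwise sign constraints on the advantage terms. So the two things I would establish are \emph{soundness} (every QPLEX output satisfies those sign constraints) and \emph{completeness} (every tuple satisfying them is reproducible by QPLEX under universal approximation).

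For \emph{soundness} ($\subseteq$), I would trace the positivity through the architecture. Each agent's dueling decomposition forces $V_i(\tau_i)=\max_{a_i}Q_i(\tau_i,a_i)$, so $A_i(\tau_i,a_i)\le 0$ with equality exactly on individually greedy actions; the positive weight $w_i(\bm{\tau})>0$ in Eq.~\eqref{eq:transformation} preserves both this sign and its equality set, giving $A_i(\bm{\tau},a_i)\le 0$. Then from Eq.~\eqref{eq:Atot}, positivity of $\lambda_i(\bm{\tau},\bm{a})$ yields $A_{tot}(\bm{\tau},\bm{a})=\sum_i\lambda_i(\bm{\tau},\bm{a})A_i(\bm{\tau},a_i)\le 0$, with equality iff every $a_i$ is individually greedy. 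This is precisely Eq.~\eqref{eq:consistency}, and it simultaneously forces $\max_{\bm{a}}Q_{tot}=V_{tot}$ (so the joint dueling structure of Definition~\ref{def:VD_plex} holds), placing the output in the advantage-based IGM class.

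For \emph{completeness} ($\supseteq$), which I expect to be the crux, I would start from a target pair $(Q_{tot},[Q_i])$ satisfying advantage-based IGM and exhibit QPLEX parameters. Using universal approximation, let the agent networks emit exactly the given $Q_i$, choose $w_i\equiv 1$, and absorb the offset $V_{tot}(\bm{\tau})-\sum_i V_i(\tau_i)$ into the biases $b_i(\bm{\tau})$ so that $\sum_i V_i(\bm{\tau})=V_{tot}(\bm{\tau})$ via Eq.~\eqref{eq:Qtot}. It then remains to match $A_{tot}$ through Eq.~\eqref{eq:Atot}. The key observation, read off Fact~\ref{fact:VDplexAdvContrain}, is that for any non-greedy $\bm{a}$ both $A_{tot}(\bm{\tau},\bm{a})<0$ and $\sum_j A_j(\bm{\tau},a_j)<0$ (at least one summand is strictly negative since the joint greedy set is the product of the individual greedy sets, and none is positive), while both vanish on the greedy set $\bm{\mathcal{A}}^*(\bm{\tau})$. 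Hence the common choice $\lambda_i(\bm{\tau},\bm{a})\equiv A_{tot}(\bm{\tau},\bm{a})\big/\sum_j A_j(\bm{\tau},a_j)$ off the greedy set (a quotient of two negatives, so strictly positive) and $\lambda_i\equiv 1$ on the greedy set gives $\sum_i\lambda_i(\bm{\tau},\bm{a})A_i(\bm{\tau},a_i)=A_{tot}(\bm{\tau},\bm{a})$ exactly, with all $\lambda_i>0$; universal approximation of the attention module in Eq.~\eqref{eq:lambda} then makes these positive weights representable.

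The main obstacle, and the step deserving the most care, is this $\lambda$ construction: I must verify strict positivity and well-definedness of the quotient off $\bm{\mathcal{A}}^*(\bm{\tau})$, handle the $0/0$ degeneracy on $\bm{\mathcal{A}}^*(\bm{\tau})$ gracefully (any positive value works there since both sides are zero), and confirm the weights lie in the positive-output class Eq.~\eqref{eq:lambda} can approximate. Everything else is bookkeeping with the dueling identities $Q_{tot}=V_{tot}+A_{tot}$ of Eq.~\eqref{eq:QplexJointQ} and the sign constraints already packaged into Fact~\ref{fact:VDplexAdvContrain}.
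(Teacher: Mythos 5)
Your proposal is correct, and its overall skeleton --- reduce to the advantage-based IGM class via Proposition~\ref{prop:EquivalentFunctionCalss} and Fact~\ref{fact:VDplexAdvContrain}, then prove two inclusions, with soundness following from the positivity of $w_i$ and $\lambda_i$ and completeness from an explicit parameter construction --- is the same as the paper's. The one place you genuinely diverge is the crux you yourself flagged: the choice of $\lambda_i$ in the completeness direction. The paper splits the joint function equally, setting $\overline{Q}_i'(\bm{\tau},\bm{a})=\overline{Q}_{tot}(\bm{\tau},\bm{a})/n$, and then takes per-agent ratios $\lambda_i=\overline{A}_i'(\bm{\tau},\bm{a})/\overline{A}_i(\tau_i,a_i)$ when $\overline{A}_i(\tau_i,a_i)<0$ and $\lambda_i=1$ otherwise; as written this drops the contribution $\overline{A}_i'(\bm{\tau},\bm{a})=\overline{A}_{tot}(\bm{\tau},\bm{a})/n<0$ at any non-greedy joint action whose $i$-th coordinate happens to be individually greedy (there $\lambda_i A_i=0$), so the sum in Eq.~\eqref{eq:Atot} recovers only a fraction of $A_{tot}$ at such points. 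Your uniform choice $\lambda_i(\bm{\tau},\bm{a})=A_{tot}(\bm{\tau},\bm{a})\big/\sum_j A_j(\bm{\tau},a_j)$ off the greedy set sidesteps this entirely: the identity $\sum_i\lambda_i A_i=A_{tot}$ holds exactly regardless of which coordinates vanish, and strict positivity follows from your (correct) observation that Eq.~\eqref{eq:app-fact-3}-type product structure forces $\sum_j A_j(\bm{\tau},a_j)<0$ off $\bm{\mathcal{A}}^*(\bm{\tau})$. So your route buys a cleaner and strictly more robust handling of the degenerate points, at no cost in generality; the paper's per-agent ratio would need the same kind of repair (e.g., concentrating the mass on the non-greedy coordinates) to be airtight.
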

In practice, QPLEX can utilize common neural network structures (e.g., multi-head attention modules) to achieve superior performance by approximating the universal approximation theorem \citep{csaji2001approximation}. We will discuss the effects of QPLEX's duplex dueling network with different configurations in Section \ref{sec:exp-optimality}.
As introduced by \cite{son2019qtran} and \cite{wang2020ld}, the completeness of value factorization is very critical for multi-agent Q-learning and we will illustrate the stability and state-of-the-art performance of QPLEX in online and offline data collections in the next section.

\section{Experiments}\label{sec:experiments}

In this section, we first study didactic examples proposed by prior work \citep{son2019qtran,wang2020ld} to investigate the effects of QPLEX's complete IGM expressiveness on learning optimality and stability. To demonstrate scalability on complex MARL domains, we also evaluate the performance of QPLEX on a range of StarCraft II benchmark tasks \citep{samvelyan2019starcraft}. The completeness of the IGM function class can express richer joint action-value function classes induced by large and diverse datasets or training buffers. This expressiveness can provide QPLEX with higher sample efficiency to achieve state-of-the-art performance in online and offline data collections. We compare QPLEX with state-of-the-art baselines: QTRAN \citep{son2019qtran}, QMIX \citep{rashid2018qmix}, VDN \citep{sunehag2018value}, Qatten \citep{yang2020qatten}, and WQMIX \citep[OW-QMIX and CW-QMIX;][]{rashid2020weighted}. In particular, the second term of Eq.~\eqref{eq:QplexJointQ} is the main difference between QPLEX and Qatten. Thus, Qatten provides a natural ablation baseline of QPLEX to demonstrate the effectiveness of this discrepancy term. The implementation details of these algorithms and experimental settings are deferred to Appendix~\ref{appendix:experiment_setting}. We also conduct two ablation studies to study the influence of the attention structure of dueling architecture and the number of parameters on QPLEX, which are deferred to be discussed in Appendix \ref{sec:ablation-study}. Towards fair evaluation, all experimental results are illustrated with the median performance and 25-75\% percentiles over 6 random seeds.

\vspace{-0.1in}
\subsection{Matrix Games} \label{sec:exp-optimality}

\begin{figure}
	\vspace{-0.2in}
	\centering
	\begin{subtable}[b]{0.32\linewidth}
		\centering
		\scalebox{0.9}{\begin{tabular}{|c|c|c|c|}
	\hline 
	\diagbox[dir=SE]{$a_2$}{$a_1$} &  $\mathcal{A}^{(1)}$ & $\mathcal{A}^{(2)}$ & $\mathcal{A}^{(3)}$ \\ \hline 
	$\mathcal{A}^{(1)}$ &  \textbf{8} &  -12 & -12  \\ \hline
	$\mathcal{A}^{(2)}$ &  -12 & (\cancel{0}) 6 & 0 \\ \hline
	$\mathcal{A}^{(3)}$ &  -12 & 0 & (\cancel{0}) 6 \\ \hline 
\end{tabular}}
		\vspace{0.2in}
		\caption{Payoff of a harder matrix game}
		\label{table:matrix_game2}
	\end{subtable}
	\hfill
	\begin{subtable}[b]{0.33\linewidth}
		\centering
		\includegraphics[width=0.9\linewidth]{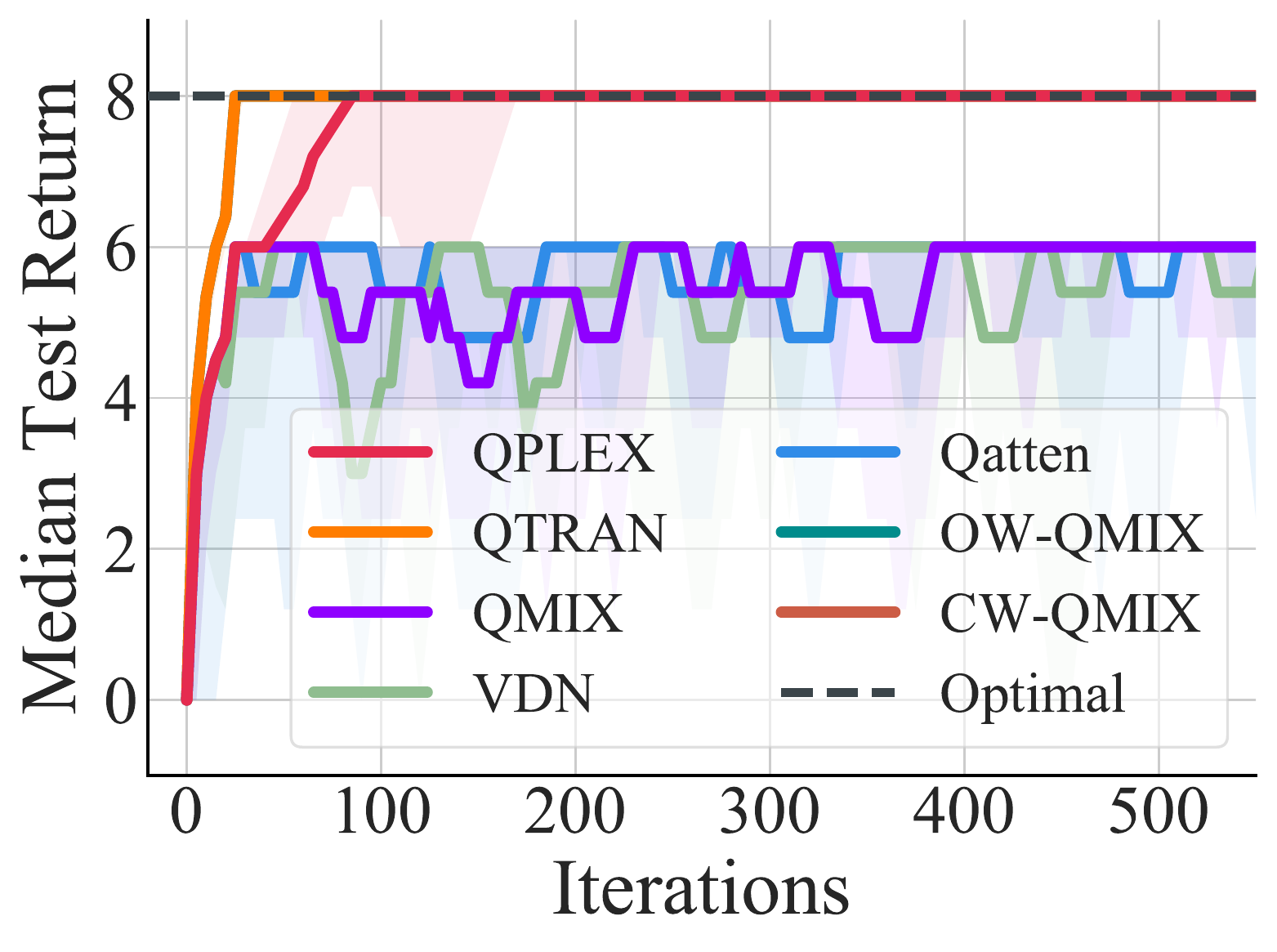}
		\caption{Deep MARL algorithms}
		\label{fig:matrix_game2_lc_1}
	\end{subtable}
	\hfill
	\begin{subtable}[b]{0.33\linewidth}
		\centering
		\includegraphics[width=0.9\linewidth]{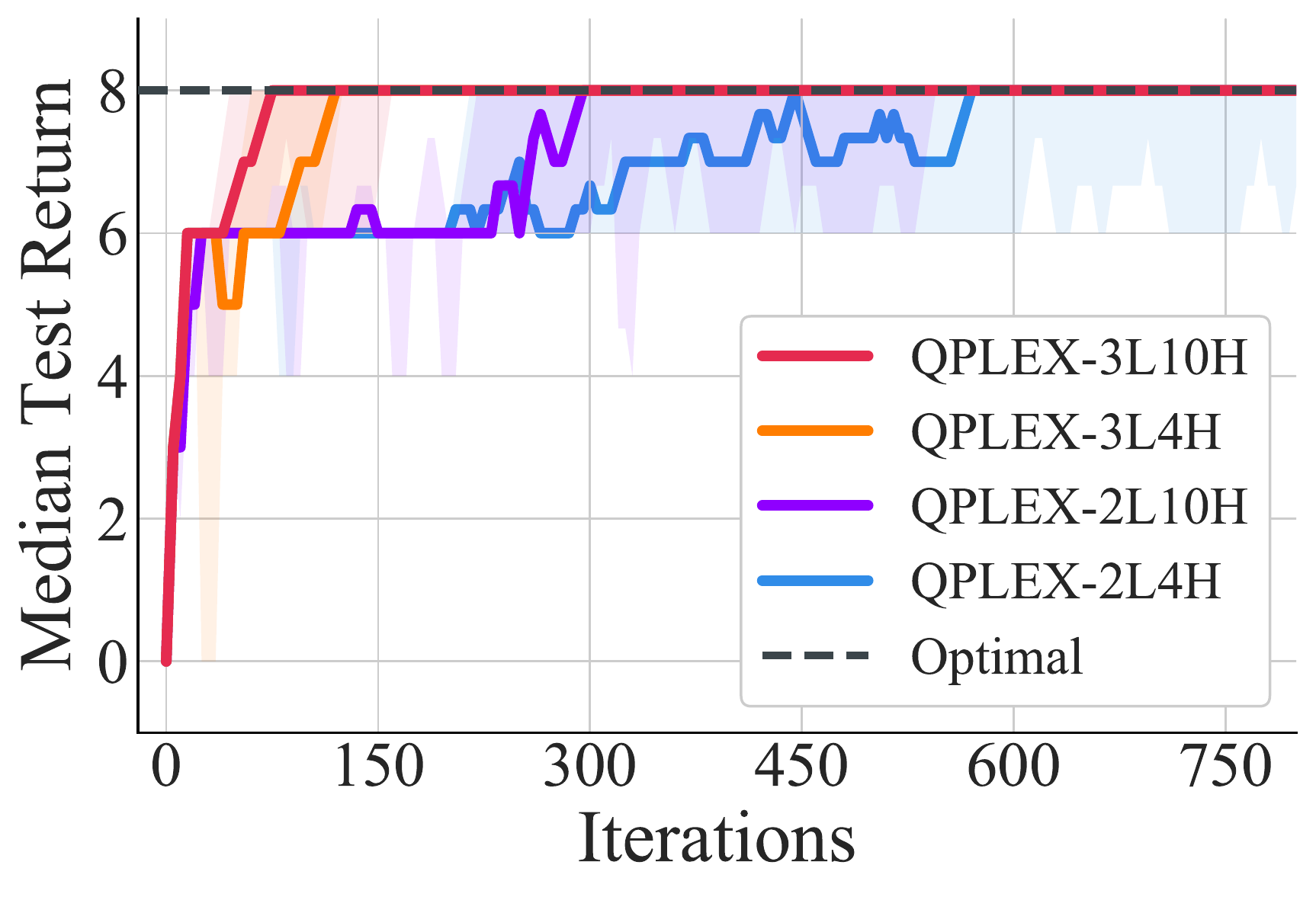}
		\caption{Learning curves of ablation study}
		\label{fig:matrix_game2_lc_2}
	\end{subtable}
	\caption{(a) Payoff matrix for a harder one-step game. Boldface means the optimal joint action selection from the payoff matrix. The strikethroughs indicate the original matrix game proposed by QTRAN. (b) The learning curves of QPLEX and other baselines. (c) The learning curve of QPLEX, whose suffix $a$L$b$H denotes the neural network size with $a$ layers and $b$ heads (multi-head attention) for learning importance weights $\lambda_i$ (see Eq.~\eqref{eq:Atot} and \eqref{eq:lambda}), respectively.}\label{fig:matrix-game}
	\vspace{-0.1in}
\end{figure}

QTRAN \citep{son2019qtran} proposes a hard matrix game, as shown in Table \ref{appendix:matrix_game} of Appendix~\ref{appendix:section_4}. In this subsection, we consider a harder matrix game in Table \ref{table:matrix_game2}, which also describes a simple cooperative multi-agent task with considerable miscoordination penalties, and its local optimum is more difficult to jump out. The optimal joint strategy of these two games is to perform action $\mathcal{A}^{(1)}$ simultaneously. To ensure  sufficient data collection in the joint action space, we adopt uniform data distribution. With this fixed dataset, we can study the optimality of multi-agent Q-learning from an optimization perspective, ignoring the challenge of exploration and sample complexity. 

As shown in Figure \ref{fig:matrix_game2_lc_1}, QPLEX, QTRAN, and WQMIX, which possess a richer expressiveness power of value factorization can achieve optimal performance, while other algorithms with limited expressiveness (e.g., QMIX, VDN, and Qatten) fall into a local optimum induced by miscoordination penalties. In the original matrix proposed by QTRAN, QPLEX and QTRAN can also successfully converge to optimal joint action-value functions. These results are deferred to Appendix~\ref{appendix:section_4}. QTRAN achieves superior performance in the matrix games but suffers from its relaxation of IGM consistency in complex domains (such as StarCraft II) shown in Section \ref{sec:exp-sc2}.

In the theoretical analysis of QPLEX, Proposition \ref{QplexIsVDplex} exploits the universal function approximation of neural networks. QPLEX allows the scalable implementations with various neural network capacities (different layers and heads of attention module) for learning importance weights $\lambda_i$ (see Eq.~\eqref{eq:Atot} and \eqref{eq:lambda}). As shown in Figure \ref{fig:matrix_game2_lc_2}, by increasing the neural network size for learning $\lambda_i$ (e.g., QPLEX-3L10H), QPLEX possesses more expressiveness of value factorization and converges faster. However, learning efficiency becomes challenging for complex neural networks. To effectively perform StarCraft II tasks ranging from 2 to 27 agents, we use a small multi-head attention module (i.e., QPLEX-1L4H) in complex domains (see Section \ref{sec:exp-sc2}). Please refer to Appendix~\ref{appendix:experiment_setting} for more detailed configurations.

\subsection{Two-state MMDP} \label{sec:exp-stability}

\begin{figure}
	\vspace{-0.15in}
	\centering
	\begin{subfigure}[b]{0.4\linewidth}
		\centering
		\input{figures/mdps/mdp-1}
		\caption{Two-state MMDP}\label{appendix:mmdp}
	\end{subfigure}
	\begin{subfigure}[b]{0.5\linewidth}
		\centering
		\includegraphics[width=0.9\linewidth]{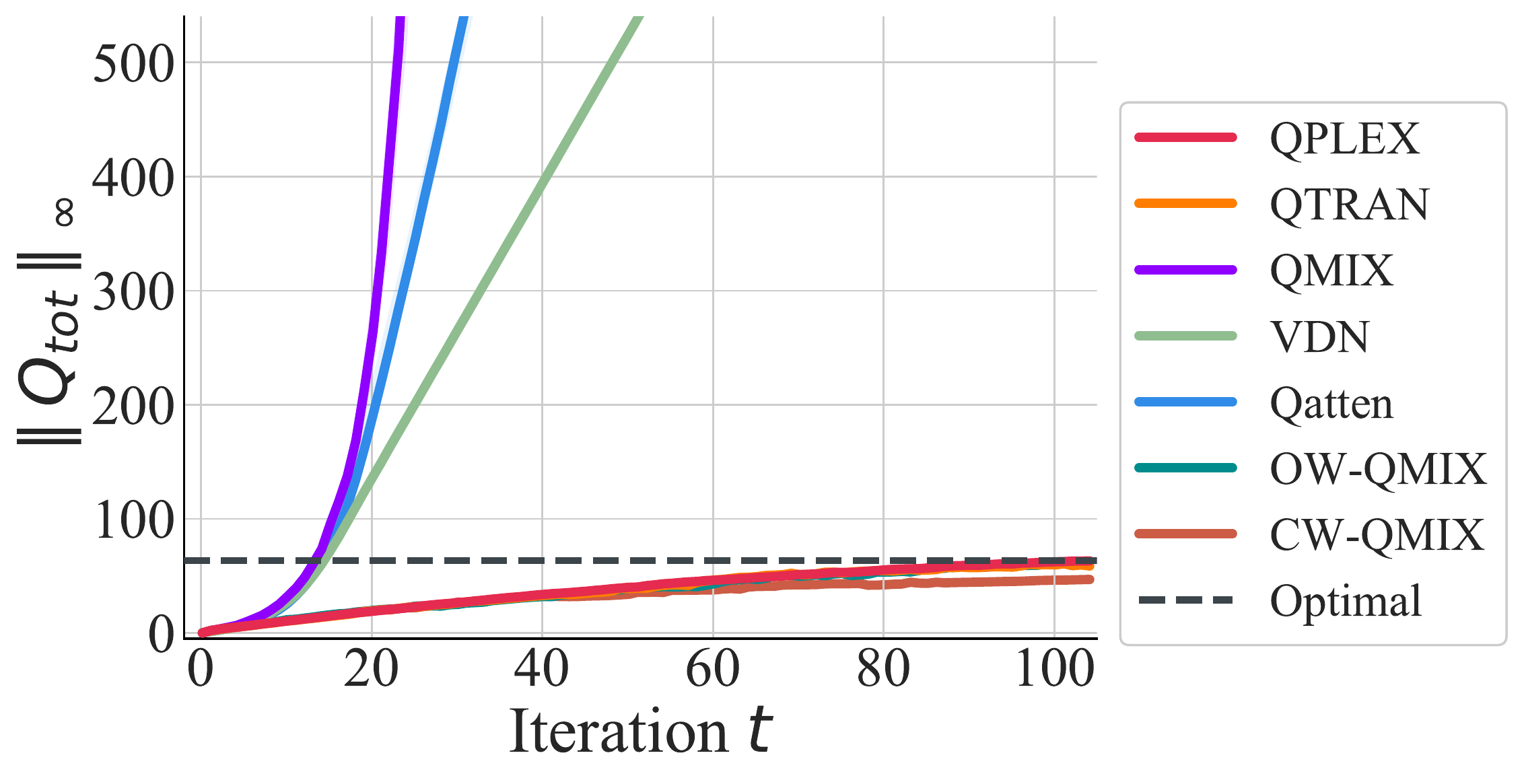}
		\caption{Learning curves of MARL algorithms}
		\label{fig:mmdp_game_norm}
	\end{subfigure}
	\caption{(a) A special two-state MMDP used to demonstrate the training stability of the multi-agent Q-learning algorithms. $r$ is a shorthand for $r(s, \bm{a})$. (b) The learning curves of $\|Q_{tot}\|_\infty$ in a specific two-state MMDP.} 
	\vspace{-0.1in}
\end{figure}
In this subsection, we focus on a Multi-agent Markov Decision Process (MMDP) \citep{boutilier1996planning} which is a fully cooperative multi-agent setting with full observability. Consider a two-state MMDP proposed by \cite{wang2020ld} with two agents, two actions, and a single reward (see Figure \ref{appendix:mmdp}). Two agents start at state $s_2$ and explore extrinsic rewards for 100 environment steps. The optimal policy of this MMDP is simply executing the action $\mathcal{A}^{(1)}$ at state $s_2$, which is the only coordination pattern to obtain the positive reward. To approximate the uniform data distribution, we adopt a uniform exploration strategy (\textit{i.e.}, $\epsilon$-greedy exploration with $\epsilon = 1$). We consider the training stability of multi-agent Q-learning algorithms with uniform data distribution in this special MMDP task. As shown in Figure \ref{fig:mmdp_game_norm}, the joint state-value function $Q_{tot}$ learned by baseline algorithms using limited function classes, including QMIX, VDN, and Qatten, will diverge. This instability phenomenon of VDN has been theoretically investigated by \cite{wang2020ld}. By utilizing richer function classes, QPLEX, QTRAN, and WQMIX can address this numerical instability issue and converge to the optimal joint state-value function.

\subsection{Decentralized StarCraft II Micromanagement Benchmark} \label{sec:exp-sc2}

A more challenging set of empirical experiments are based on StarCraft Multi-Agent Challenge (SMAC) benchmark \citep{samvelyan2019starcraft}. We first investigate empirical performance in a popular experimental setting with $\epsilon$-greedy exploration and a limited first-in-first-out (FIFO) buffer \citep{samvelyan2019starcraft}, named online data collection setting. 
To demonstrate the offline training potential of QPLEX, we also adopt the offline data collection setting proposed by \cite{levine2020offline}, which can be granted access to a given dataset without additional online exploration. 

\begin{figure}
	\centering
	\vspace{-0.1in}
	\begin{subtable}[b]{0.35\linewidth}
		\centering
		\includegraphics[width=0.9\linewidth]{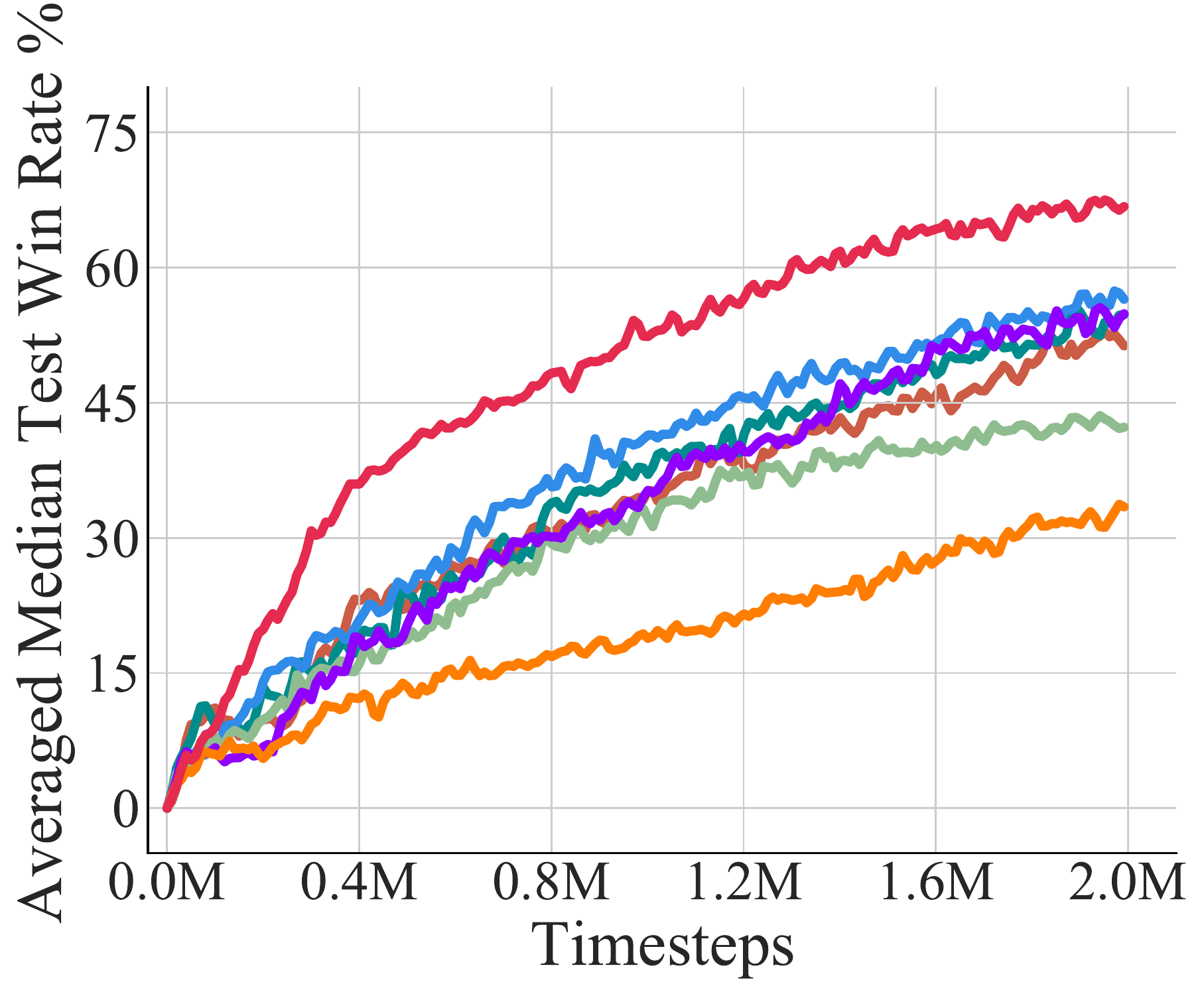}
		\caption{Averaged test win rate}\label{fig:sc2-online-overall-rate}
	\end{subtable}
	\begin{subtable}[b]{0.53\linewidth}
		\centering
		\includegraphics[width=0.9\linewidth]{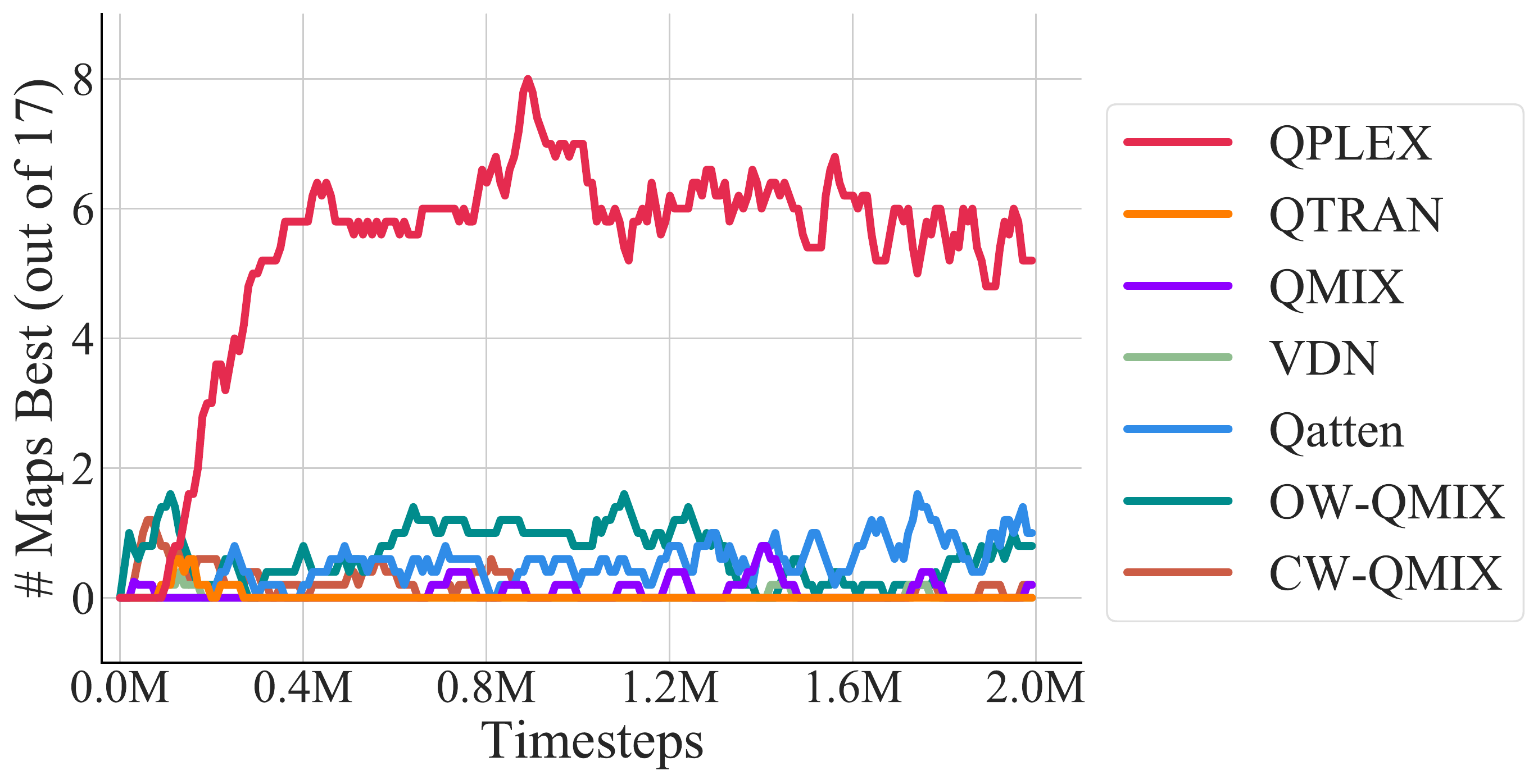}
		\caption{\# Maps best out of 17 scenarios}\label{fig:sc2-online-overall-number}
	\end{subtable}
	\caption{(a) The median test win \%, averaged across all 17 scenarios. (b) The number of scenarios in which the algorithms' median test win \% is the highest by at least 1/32 (smoothed).}\label{fig:sc2-online-overall}
	\vspace{-0.1in}
\end{figure}

\subsubsection{Training with Online Data Collection}\label{sec:online}

We evaluate QPLEX in 17 benchmark tasks of StarCraft II, which contains 14 popular tasks proposed by SMAC \citep{samvelyan2019starcraft} and three new super hard cooperative tasks. To demonstrate the overall performance of each algorithm, Figure \ref{fig:sc2-online-overall} plots the averaged median test win rate across all 17 scenarios and the number of scenarios in which the algorithm outperforms, respectively. Figure~\ref{fig:sc2-online-overall-rate} shows that, compared with other baselines, QPLEX constantly and significantly outperforms baselines over the whole training process and exceeds at least 10\% median test win rate averaged across all 17 scenarios. Moreover, Figure \ref{fig:sc2-online-overall-number} illustrates that, among all 17 tasks, QPLEX is the best performer on up to eight tasks, underperforms on just two tasks, and ties for the best performer on the rest tasks. After 0.8M timesteps, the number of tasks that QPLEX achieves the best performance gradually decreases to five, because, in several easy tasks, other baselines also reach almost 100\% test win rate as shown in Figure \ref{fig:online}. The overall evaluation diagram of the original SMAC benchmark (14 tasks) corresponding to Figure \ref{fig:sc2-online-overall} is deferred to Figure \ref{fig:sc2-online-overall-14} in Appendix \ref{appendix:sc2-online-deferred-figures}.

Figure~\ref{fig:online} shows the learning curves on nine tasks in the online data collection setting and the results of other eight maps are deferred to Figure \ref{fig:sc2-online-deferred-figures} in Appendix \ref{appendix:sc2-online-deferred-figures}. From Figure \ref{fig:online}, we can observe that QPLEX significantly outperforms other baselines with higher sample efficiency. On the super hard map 5s10z, the performance gap between QPLEX and other baselines exceeds 30\% in test win rate, and the visualized strategies of QPLEX and QMIX in this map are deferred to Appendix~\ref{appendix:sc2-case-study}. Most multi-agent Q-learning baselines including QMIX, VDN, and Qatten achieve reasonable performance (see Figure \ref{fig:online}). However, as Figure \ref{fig:sc2-online-overall} suggests, QTRAN performs the worst in these comparative experiments, even though it performs well in the didactic games. From a theoretical perspective, the online data collection process utilizes an $\epsilon$-greedy exploration process, which requires individual greedy action selections to build an effective training buffer. QTRAN may suffer from its relaxation of IGM consistency (soft constraints of IGM) in the online data collection phase, while the duplex dueling architecture of QPLEX (hard constraint of IGM) provides effective individual greedy action selections, making it suitable for data collection with $\epsilon$-greedy exploration.

Moreover, although WQMIX (OW-QMIX and CW-QMIX) outperforms QMIX in some tasks (illustrated in Figure~\ref{fig:online}, e.g., 2c\_vs\_64zg and bane\_vs\_bane), WQMIX show very similar overall performance as QMIX across 17 StarCraft II benchmark tasks. In contrast, QPLEX achieves significant improvement in convergence performance in a lot of hard and super hard maps and demonstrates high sample efficiency across most scenarios (see Figure \ref{fig:online}).

\begin{figure}
	\centering
	\vspace{-0.1in}
	\includegraphics[width=1.0\linewidth]{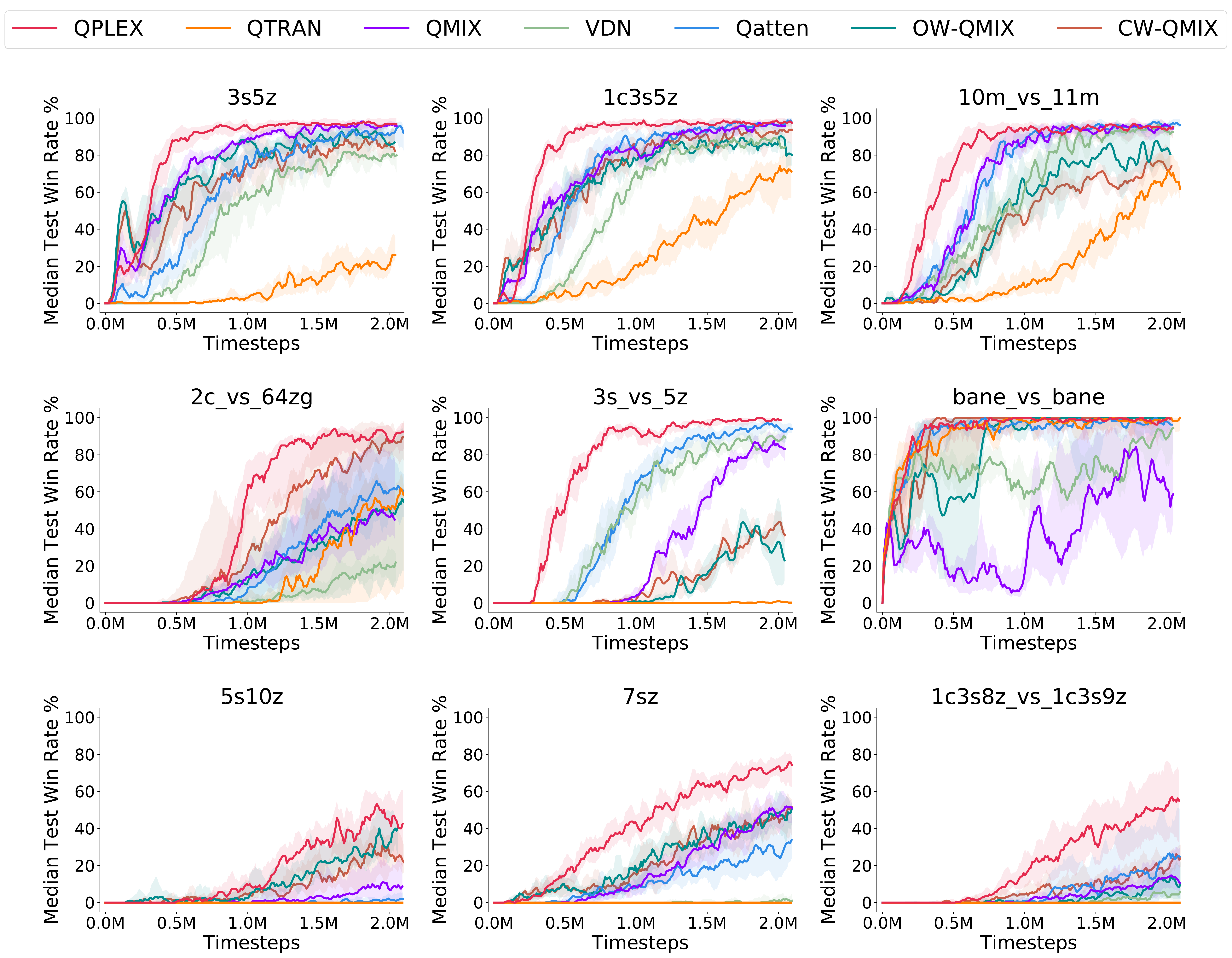}
	\caption{Learning curves of StarCraft II with online data collection.}
	\label{fig:online}
	\vspace{-0.1in}
\end{figure}

\subsubsection{Training with Offline Data Collection}

Recently, offline reinforcement learning has been regarded as a key step for real-world RL applications \citep{dulac2019challenges,levine2020offline}. \cite{agarwal2020optimistic} presents an optimistic perspective of offline Q-learning that DQN and its variants can achieve superior performance in Atari 2600 games \citep{bellemare2013arcade} with sufficiently large and diverse datasets. In MARL, StarCraft~II benchmark has the same discrete action space as Atari. We conduct a lot of experiments on the StarCraft II benchmark tasks to study offline multi-agent Q-learning in this subsection. We adopt a large and diverse dataset to make the expressiveness power of value factorization become the dominant factor to investigate. We train a behavior policy of QMIX and collect all its experienced transitions throughout the training process (see the details in Appendix \ref{appendix:section_4}). As shown in Figure~\ref{fig:offline} in Appendix \ref{appendix:offline}, QPLEX significantly outperforms other multi-agent Q-learning baselines and possesses the state-of-the-art value factorization structure for offline multi-agent Q-learning. QMIX and Qatten cannot always maintain stable learning performance, and VDN suffers from offline data collection and leads to weak empirical results. QTRAN may perform well in certain cases when its soft constraints, two $\ell_2$-penalty terms, are well minimized. With offline data collection, individual greedy action selections do not need to build a training buffer, but they still need to compute the one-step TD target for centralized training. Therefore, compared with QTRAN, QPLEX still has theoretical advantages regarding the IGM principle in the offline data collection setting.

\section{Conclusion}

In this paper, we introduced QPLEX, a novel multi-agent Q-learning framework that allows centralized end-to-end training and learns to factorize a joint action-value function to enable decentralized execution. QPLEX takes advantage of a duplex dueling architecture that efficiently encodes the IGM consistency constraint on joint and individual greedy action selections. Our theoretical analysis shows that QPLEX achieves a complete IGM function class. Empirical results demonstrate that it significantly outperforms state-of-the-art baselines in both online and offline data collection settings. In particular, QPLEX possesses strong ability of supporting offline training. This ability provides QPLEX with high sample efficiency and opportunities of utilizing offline multi-source datasets.
It will be an interesting and valuable direction to study offline multi-agent reinforcement learning in continuous action spaces (such as MuJoCo \citep{mujoco}) with QPLEX's value factorization.

\section*{Acknowledgements}
We would like to thank the anonymous reviewers for their
insightful comments and helpful suggestions.
This work is supported in part by Science and Technology Innovation 2030 – ``New Generation Artificial Intelligence” Major Project (No. 2018AAA0100904), and a grant from the Institute of Guo Qiang, Tsinghua University.

\bibliography{iclr2021_conference}
\bibliographystyle{iclr2021_conference}

\newpage
\appendix
\section{Omitted Proofs in Section \ref{sec:QPLEX}}

\defadvantagebasedIGM*

Let the action-value function class derived from IGM is denoted by
\begin{align*}
\widetilde{\mathcal{Q}}=\left\{\left(\widetilde{Q}_{tot}\in \mathbb{R}^{|\bm{\mathcal{T}}||\mathcal{A}|^n}, \left[\widetilde{Q}_i\in \mathbb{R}^{|\bm{\mathcal{T}}||\mathcal{A}|}\right]_{i=1}^n \right) \Big| \text{~Eq.~}\eqref{eq:igm} \text{ is satisfied} \right\},
\end{align*}
where $\widetilde{Q}_{tot}$ and $\left[\widetilde{Q}_i\right]_{i=1}^n$ denote the joint and individual action-value functions induced by IGM, respectively.  Similarly, let 
\begin{align*}
\widehat{\mathcal{Q}}=\left\{\left(\widehat{Q}_{tot}\in \mathbb{R}^{|\bm{\mathcal{T}}||\mathcal{A}|^n}, \left[\widehat{Q}_i\in \mathbb{R}^{|\bm{\mathcal{T}}||\mathcal{A}|}\right]_{i=1}^n \right) \Big| \text{~Eq.~}\eqref{eq:two_dueling},\eqref{eq:two_dueling2},\eqref{VD_plex_eq} \text{ are satisfied} \right\}
\end{align*}
denote the action-value function class derived from advantage-based IGM. $\widetilde{V}_{tot}$ and $\widetilde{A}_{tot}$ denote the joint state-value and advantage functions, respectively. $\left[\widetilde{V}_i\right]_{i=1}^n$ and $\left[\widetilde{A}_i\right]_{i=1}^n$ denote the individual state-value and advantage functions induced by advantage-IGM, respectively. According to the duplex dueling architecture $Q=V+A$ stated in advantage-based IGM (see Definition \ref{def:VD_plex}), we derive the joint and individual action-value functions as following: $\forall \bm{\tau} \in \bm{\mathcal{T}}$, $\forall \bm{a} \in \bm{\mathcal{A}}$, $\forall i \in \mathcal{N}$,
\begin{align*}
\widehat{Q}_{tot}(\bm{\tau}, \bm{a}) = \widehat{V}_{tot}(\bm{\tau}) + \widehat{A}_{tot}(\bm{\tau}, \bm{a}) \quad\text{and}
\quad \widehat{Q}_i(\tau_i, a_i) = \widehat{V}_i(\tau_i) + \widehat{A}_i(\tau_i, a_i).
\end{align*}

\EquivalentFunctionCalss*
\begin{proof}
	We will prove $\widetilde{\mathcal{Q}} \equiv \widehat{\mathcal{Q}}$ in the following two directions.
	
	\paragraph{$\widetilde{\mathcal{Q}} \subseteq \widehat{\mathcal{Q}}$} For any $\left(\widetilde{Q}_{tot}, \left[\widetilde{Q}_i\right]_{i=1}^n\right)\in \widetilde{\mathcal{Q}}$, we construct $\widehat{Q}_{tot}= \widetilde{Q}_{tot}$ and $\left[\widehat{Q}_i\right]_{i=1}^n=\left[\widetilde{Q}_i\right]_{i=1}^n$. 
	The joint and individual state-value/advantage functions induced by advantage-IGM
	\begin{align*}
	\widehat{V}_{tot}(\bm{\tau}) = \max_{\bm{a'}} \widehat{Q}_{tot}(\bm{\tau}, \bm{a'}) &\quad\text{and}\quad \widehat{A}_{tot}(\bm{\tau}, \bm{a}) = \widehat{Q}_{tot}(\bm{\tau}, \bm{a}) - \widehat{V}_{tot}(\bm{\tau}), \\
	\widehat{V}_i(\tau_i) = \max_{a_i'} \widehat{Q}_i(\tau_i, a_i') &\quad\text{and}\quad \widehat{A}_i(\tau_i, a_i) = \widehat{Q}_i(\tau_i, a_i') - \widehat{V}_i(\tau_i),\quad \forall i \in \mathcal{N},
	\end{align*}
	are derived by Eq.~\eqref{eq:two_dueling} and Eq.~\eqref{eq:two_dueling2}, respectively. Because state-value functions do not affect the greedy action selection, $\forall \bm{\tau} \in \bm{\mathcal{T}}$, $\forall \bm{a} \in \bm{\mathcal{A}}$, 
	\begin{align*}
	\mathop{\arg\max}_{\bm{a}\in\bm{\mathcal{A}}}&~ \widetilde{Q}_{tot}(\bm{\tau}, \bm{a}) = \left(\mathop{\arg\max}_{a_1\in\mathcal{A}} \widetilde{Q}_1(\tau_1, a_1), \dots, \mathop{\arg\max}_{a_n\in\mathcal{A}} \widetilde{Q}_n(\tau_n, a_n)\right) \\
	\Rightarrow\mathop{\arg\max}_{\bm{a}\in\bm{\mathcal{A}}}&~ \widehat{Q}_{tot}(\bm{\tau}, \bm{a}) = \left(\mathop{\arg\max}_{a_1\in\mathcal{A}} \widehat{Q}_1(\tau_1, a_1), \dots, \mathop{\arg\max}_{a_n\in\mathcal{A}} \widehat{Q}_n(\tau_n, a_n)\right) \\
	\Rightarrow\mathop{\arg\max}_{\bm{a}\in\bm{\mathcal{A}}}&~ \left(\widehat{Q}_{tot}(\bm{\tau}, \bm{a})-\widehat{V}_{tot}(\bm{\tau})\right) = \\ &\left(\mathop{\arg\max}_{a_1\in\mathcal{A}} \left(\widehat{Q}_1(\tau_1, a_1)-\widehat{V}_1(\tau_1)\right), \dots, \mathop{\arg\max}_{a_n\in\mathcal{A}} \left(\widehat{Q}_n(\tau_n, a_n)-\widehat{V}_n(\tau_n)\right)\right) \\
	\Rightarrow\mathop{\arg\max}_{\bm{a}\in\bm{\mathcal{A}}}&~ \widehat{A}_{tot}(\bm{\tau}, \bm{a}) = \left(\mathop{\arg\max}_{a_1\in\mathcal{A}} \widehat{A}_1(\tau_1, a_1), \dots, \mathop{\arg\max}_{a_n\in\mathcal{A}} \widehat{A}_n(\tau_n, a_n)\right). \\
	\end{align*}
	Thus, $\left(\widehat{Q}_{tot}, \left[\widehat{Q}_i\right]_{i=1}^n\right)\in \widehat{\mathcal{Q}}$, which means that $\widetilde{\mathcal{Q}} \subseteq \widehat{\mathcal{Q}}$.
	
	\paragraph{$\widehat{\mathcal{Q}} \subseteq \widetilde{\mathcal{Q}}$} We will prove this direction in the same way. For any $\left(\widehat{Q}_{tot}, \left[\widehat{Q}_i\right]_{i=1}^n\right)\in \widehat{\mathcal{Q}}$, we construct $\widetilde{Q}_{tot}= \widehat{Q}_{tot}$ and $\left[\widetilde{Q}_i\right]_{i=1}^n=\left[\widehat{Q}_i\right]_{i=1}^n$. 
	Because state-value functions do not affect the greedy action selection, $\forall \bm{\tau} \in \bm{\mathcal{T}}$, $\forall \bm{a} \in \bm{\mathcal{A}}$, 
	\begin{align*}
	\mathop{\arg\max}_{\bm{a}\in\bm{\mathcal{A}}}&~ \widehat{A}_{tot}(\bm{\tau}, \bm{a}) = \left(\mathop{\arg\max}_{a_1\in\mathcal{A}} \widehat{A}_1(\tau_1, a_1), \dots, \mathop{\arg\max}_{a_n\in\mathcal{A}} \widehat{A}_n(\tau_n, a_n)\right) \\
	\Rightarrow\mathop{\arg\max}_{\bm{a}\in\bm{\mathcal{A}}}&~ \left(\widehat{A}_{tot}(\bm{\tau}, \bm{a})+\widehat{V}_{tot}(\bm{\tau})\right) = \\ &\left(\mathop{\arg\max}_{a_1\in\mathcal{A}} \left(\widehat{A}_1(\tau_1, a_1)+\widehat{V}_1(\tau_1)\right), \dots, \mathop{\arg\max}_{a_n\in\mathcal{A}} \left(\widehat{A}_n(\tau_n, a_n)+\widehat{V}_n(\tau_n)\right)\right) \\
	\Rightarrow\mathop{\arg\max}_{\bm{a}\in\bm{\mathcal{A}}}&~ \widehat{Q}_{tot}(\bm{\tau}, \bm{a}) = \left(\mathop{\arg\max}_{a_1\in\mathcal{A}} \widehat{Q}_1(\tau_1, a_1), \dots, \mathop{\arg\max}_{a_n\in\mathcal{A}} \widehat{Q}_n(\tau_n, a_n)\right) \\
	\Rightarrow\mathop{\arg\max}_{\bm{a}\in\bm{\mathcal{A}}}&~ \widetilde{Q}_{tot}(\bm{\tau}, \bm{a}) = \left(\mathop{\arg\max}_{a_1\in\mathcal{A}} \widetilde{Q}_1(\tau_1, a_1), \dots, \mathop{\arg\max}_{a_n\in\mathcal{A}} \widetilde{Q}_n(\tau_n, a_n)\right). \\
	\end{align*}
	Thus, $\left(\widetilde{Q}_{tot}, \left[\widetilde{Q}_i\right]_{i=1}^n\right)\in \widetilde{\mathcal{Q}}$, which means that $\widehat{\mathcal{Q}} \subseteq \widetilde{\mathcal{Q}}$. The action-value function classes derived from advantage-based IGM and IGM are equivalent.
\end{proof}

\VDplexAdvContrain*
\begin{proof}
	We derive that $\forall \bm{\tau} \in \bm{\mathcal{T}}$, $\forall \bm{a} \in \bm{\mathcal{A}}$, $\forall i \in \mathcal{N}$, $\widehat{A}_{tot}(\bm{\tau},\bm{a}) \le 0$ and $\widehat{A}_i(\tau_i,a_i) \le 0$ from Eq.~\eqref{eq:two_dueling} and Eq.~\eqref{eq:two_dueling2} of Definition \ref{def:VD_plex}, respectively.
	According to the definition of $\mathop{\arg\max}$ operator, Eq.~\eqref{eq:two_dueling}, and Eq.~\eqref{eq:two_dueling2}, $\forall \bm{\tau} \in \bm{\mathcal{T}}$, let $\widehat{\bm{\mathcal{A}}}^*(\bm{\tau})$ denote $\mathop{\arg\max}_{\bm{a}\in\bm{\mathcal{A}}}\widehat{A}_{tot}(\bm{\tau}, \bm{a})$ as follows:
	\begin{align}
	\widehat{\bm{\mathcal{A}}}^*(\bm{\tau})=&~\mathop{\arg\max}_{\bm{a}\in\bm{\mathcal{A}}}\widehat{A}_{tot}(\bm{\tau}, \bm{a})= \mathop{\arg\max}_{\bm{a}\in\bm{\mathcal{A}}}\widehat{Q}_{tot}(\bm{\tau}, \bm{a}) \notag \\
	=&~\left\{\bm{a} | \bm{a} \in \bm{\mathcal{A}}, \widehat{Q}_{tot}(\bm{\tau}, \bm{a}) = \widehat{V}_{tot}(\bm{\tau})\right\} \notag \\
	=&~\left\{\bm{a} | \bm{a} \in \bm{\mathcal{A}}, \widehat{Q}_{tot}(\bm{\tau}, \bm{a}) -\widehat{V}_{tot}(\bm{\tau})=0 \right\} \notag \\
	\label{eq:app-fact-6}
	=&~\left\{\bm{a} | \bm{a} \in \bm{\mathcal{A}}, \widehat{A}_{tot}(\bm{\tau}, \bm{a})=0 \right\}.
	\end{align}
	Similarly, $\forall \bm{\tau} \in \bm{\mathcal{T}}$,  $\forall i \in \mathcal{N}$, let $\widehat{\mathcal{A}}^*_i(\tau_i)$ denote $\mathop{\arg\max}_{a_i\in\mathcal{A}}\widehat{A}_i(\tau_i, a_i)$ as follows:
	\begin{align}
	\widehat{\mathcal{A}}^*_i(\tau_i)=&~\mathop{\arg\max}_{a_i\in\mathcal{A}}\widehat{A}_i(\tau_i, a_i)= \mathop{\arg\max}_{a_i\in\mathcal{A}}\widehat{Q}_i(\tau_i, a_i) \notag \\
	=&~\left\{a_i | a_i \in \mathcal{A}, \widehat{Q}_i(\tau_i, a_i) = \widehat{V}_{i}(\tau_i)\right\} \notag \\
	\label{eq:app-fact-5}
	=&~\left\{a_i | a_i \in \mathcal{A}, \widehat{A}_i(\tau_i, a_i) =0\right\}.
	\end{align}
	Thus, $\forall\bm{\tau} \in \bm{\mathcal{T}}$, $\forall \bm{a^*} \in \widehat{\bm{\mathcal{A}}}^*(\bm{\tau})$, $\forall \bm{a} \in \bm{\mathcal{A}} \setminus \widehat{\bm{\mathcal{A}}}^*(\bm{\tau})$,
	\begin{align}\label{eq:app-fact-1}
	\widehat{A}_{tot}(\bm{\tau}, \bm{a^*})=0 \quad\text{and}\quad \widehat{A}_{tot}(\bm{\tau}, \bm{a})< 0;
	\end{align}
	$\forall\bm{\tau} \in \bm{\mathcal{T}}$, $\forall i \in \mathcal{N}$, $\forall a^*_i \in \widehat{\mathcal{A}}^*(\tau_i)$, $\forall a_i \in \mathcal{A} \setminus \widehat{\mathcal{A}}^*(\tau_i)$, 
	\begin{align}\label{eq:app-fact-2}
	\widehat{A}_{i}(\tau_i, a^*_i)=0 \quad\text{and}\quad \widehat{A}_{i}(\tau_i, a_i)< 0.
	\end{align}
	Recall the constraint stated in Eq.~\eqref{VD_plex_eq}, $\forall\bm{\tau} \in \bm{\mathcal{T}}$,
	\begin{align*}
	\mathop{\arg\max}_{\bm{a}\in\bm{\mathcal{A}}}&~ \widehat{A}_{tot}(\bm{\tau}, \bm{a}) = \left(\mathop{\arg\max}_{a_1\in\mathcal{A}} \widehat{A}_1(\tau_1, a_1), \dots, \mathop{\arg\max}_{a_n\in\mathcal{A}} \widehat{A}_n(\tau_n, a_n)\right).
	\end{align*}
	We can rewrite the constraint of advantage-based IGM stated in Eq.~\eqref{VD_plex_eq} as $\forall\bm{\tau} \in \bm{\mathcal{T}}$,
	\begin{align}\label{eq:app-fact-3}
	\widehat{\bm{\mathcal{A}}}^*(\bm{\tau}) = \left\{\langle a_1, \dots, a_n \rangle \big| a_i \in \widehat{\mathcal{A}}^*_i(\tau_i), \forall i \in \mathcal{N}\right\}.
	\end{align}
	Therefore, combining Eq.~\eqref{eq:app-fact-1}, Eq.~\eqref{eq:app-fact-2}, and Eq.~\eqref{eq:app-fact-3}, we can derive  $\forall\bm{\tau} \in \bm{\mathcal{T}}$, $\forall \bm{a^*} \in \widehat{\bm{\mathcal{A}}}^*(\bm{\tau})$, $\forall \bm{a} \in \bm{\mathcal{A}} \setminus \widehat{\bm{\mathcal{A}}}^*(\bm{\tau})$, $\forall i \in \mathcal{N}$,
	\begin{align}\label{eq:app-fact-4}
	\widehat{A}_{tot}(\bm{\tau}, \bm{a^*})=\widehat{A}_i(\tau_i, a_i^*)=0 \quad\text{and}\quad \widehat{A}_{tot}(\bm{\tau}, \bm{a})< 0, \widehat{A}_i(\tau_i, a_i) \le 0.
	\end{align}
	In another way, combining Eq.~\eqref{eq:app-fact-1}, Eq.~\eqref{eq:app-fact-2}, and Eq.~\eqref{eq:app-fact-4}, we can derive Eq.~\eqref{eq:app-fact-3} by the definition of $\widehat{\bm{\mathcal{A}}}^*$ and $\left[\widehat{\mathcal{A}}^*\right]_{i=1}^n$ (see Eq.~\eqref{eq:app-fact-6} and Eq.~\eqref{eq:app-fact-5}). In more detail, the closed set property of Cartesian product of $\left[a^*_i\right]_{i=1}^n$ has been encoded into the Eq.~\eqref{eq:app-fact-3} and Eq.~\eqref{eq:app-fact-4} simultaneously.
\end{proof}

\QplexIsVDplex*
\begin{proof}
	We assume that the neural network of QPLEX can be large enough to achieve the universal function approximation by corresponding theorem \citep{csaji2001approximation}.
	Let the action-value function class that QPLEX can realize is denoted by
	\begin{align*}
	\overline{\mathcal{Q}}=\left\{\left(\overline{Q}_{tot}\in \mathbb{R}^{|\bm{\mathcal{T}}||\mathcal{A}|^n}, \left[\overline{Q}_i\in \mathbb{R}^{|\bm{\mathcal{T}}||\mathcal{A}|}\right]_{i=1}^n \right) \Big| \text{~Eq.~}\eqref{eq:transformation},\eqref{eq:Qtot},\eqref{eq:Atot},\eqref{eq:lambda},\eqref{eq:QplexJointQ} \text{ are satisfied} \right\}.
	\end{align*}
	In addition, $\overline{Q}_{tot}$, $\overline{V}_{tot}$, $\overline{A}_{tot}$, $\left[\overline{Q}_i'\right]_{i=1}^n$, $\left[\overline{V}_i'\right]_{i=1}^n$,  $\left[\overline{A}_i'\right]_{i=1}^n$, $\left[\overline{Q}_i\right]_{i=1}^n$, $\left[\overline{V}_i\right]_{i=1}^n$, and $\left[\overline{A}_i\right]_{i=1}^n$ denote the corresponding (joint, transformed, and individual) (action-value, state-value, and advantage) functions, respectively. In the implementation of QPLEX, we ensure the positivity of important weights of \textit{Transformation} and joint advantage function, $\left[w_i\right]_{i=1}^n$ and $\left[\lambda_i\right]_{i=1}^n$, which maintains the greedy action selection flow and rules out these non-interesting points (zeros) on optimization. 
	We will prove $\widehat{\mathcal{Q}} \equiv \overline{\mathcal{Q}}$ in the following two directions.
	
	\paragraph{$\widehat{\mathcal{Q}} \subseteq \overline{\mathcal{Q}}$}
	For any $\left(\widehat{Q}_{tot}, \left[\widehat{Q}_i\right]_{i=1}^n\right)\in \widehat{\mathcal{Q}}$, we construct $\overline{Q}_{tot}= \widehat{Q}_{tot}$ and $\left[\overline{Q}_i\right]_{i=1}^n=\left[\widehat{Q}_i\right]_{i=1}^n$ and 
	derive $\overline{V}_{tot}$, $\overline{A}_{tot}$, $\left[\overline{V}_i\right]_{i=1}^n$, and $\left[\overline{A}_i\right]_{i=1}^n$ by Eq.\eqref{eq:two_dueling} and Eq.~\eqref{eq:two_dueling2}, respectively. Note that in the construction of QPLEX, 
	\begin{align*}
	V_i(\bm{\tau})=w_i(\bm{\tau})V_i(\tau_i) + b_i(\bm{\tau}) \quad \text{and} \quad A_i(\bm{\tau}, a_i)=w_i(\bm{\tau})A_i(\tau_i, a_i)
	\end{align*} and
	\begin{align*}
	Q_{tot}(\bm{\tau}, \bm{a}) =V_{tot}(\bm{\tau}) + A_{tot}(\bm{\tau}, \bm{a}) 
	= \sum_{i=1}^n V_i(\bm{\tau}) +\sum_{i=1}^n \lambda_i(\bm{\tau}, \bm{a}) A_i(\bm{\tau}, a_i).
	\end{align*}
	In addition, we construct transformed functions connecting joint and individual functions as follows: $\forall \bm{\tau} \in \bm{\mathcal{T}}$, $\forall \bm{a} \in \bm{\mathcal{A}}$, $\forall i \in \mathcal{N}$,
	\begin{align*}
	\overline{Q}_i'(\bm{\tau},\bm{a}) =\frac{\overline{Q}_{tot}(\bm{\tau},\bm{a})}{n},~\overline{V}_i'(\bm{\tau}) =\mathop{\arg\max}_{\bm{a}\in\bm{\mathcal{A}}}\overline{Q}_i'(\bm{\tau},\bm{a}),\text{ and }\overline{A}_i'(\bm{\tau},\bm{a}) =\overline{Q}_i'(\bm{\tau},\bm{a})-\overline{V}_i'(\bm{\tau}),
	\end{align*}
	which means that according to Fact \ref{fact:VDplexAdvContrain},
	\begin{align*}
	w_i(\bm{\tau}) = 1,~b_i(\bm{\tau}) = \overline{V}_i'(\bm{\tau}) - \overline{V}_i(\tau_i),
	\text{ and }\lambda_i(\tau_i,\bm{a})=\left\{
	\begin{aligned}
	&\frac{\overline{A}_i'(\tau_i,\bm{a})}{\overline{A}_i(\tau_i,a_i)} > 0, &\text{ when } \overline{A}_i(\tau_i,a_i) < 0, \\
	&1, &\text{ when } \overline{A}_i(\tau_i,a_i) = 0.
	\end{aligned}
	\right. 
	\end{align*}
	Thus, $\left(\overline{Q}_{tot}, \left[\overline{Q}_i\right]_{i=1}^n\right)\in \overline{\mathcal{Q}}$, which means that $\widehat{\mathcal{Q}} \subseteq \overline{\mathcal{Q}}$.
	
	\paragraph{$\overline{\mathcal{Q}} \subseteq \widehat{\mathcal{Q}}$}
	For any $\left(\overline{Q}_{tot}, \left[\overline{Q}_i\right]_{i=1}^n\right)\in \overline{\mathcal{Q}}$, 
	with the similar discussion of Fact \ref{fact:VDplexAdvContrain}, $\forall \bm{\tau} \in \bm{\mathcal{T}}$,  $\forall i \in \mathcal{N}$, let  $\overline{\mathcal{A}}^*_i(\tau_i)$ denote
	$\mathop{\arg\max}_{a_i\in\mathcal{A}}\overline{A}_i(\tau_i, a_i)$, where
	\begin{align*}
	\overline{\mathcal{A}}^*_i(\tau_i)=&~\left\{a_i | a_i \in \mathcal{A}, \overline{A}_i(\tau_i, a_i) =0\right\}.
	\end{align*}
	Combining the positivity of $\left[w_i\right]_{i=1}^n$ and $\left[\lambda_i\right]_{i=1}^n$ with \text{~Eq.~}\eqref{eq:transformation}, \eqref{eq:Qtot}, \eqref{eq:Atot}, and \eqref{eq:QplexJointQ}, we can derive  $\forall\bm{\tau} \in \bm{\mathcal{T}}$, $\forall i \in \mathcal{N}$, $\forall a_i^* \in \overline{\mathcal{A}}^*(\tau_i)$, $\forall a_i \in \mathcal{A} \setminus\overline{\mathcal{A}}^*(\tau_i)$, 
	\begin{align*}
	\overline{A}_i(\tau_i, a_i^*)=0&\quad\text{and}\quad\overline{A}_i(\tau_i, a_i)<0\\ 
	\Rightarrow\quad\overline{A}_i'(\bm{\tau}, a_i^*)=w_i(\bm{\tau})\overline{A}_i(\tau_i, a_i^*)=0&\quad\text{and}\quad\overline{A}_i'(\bm{\tau}, a_i)=w_i(\bm{\tau})\overline{A}_i(\tau_i, a_i)<0 \\
	\Rightarrow\quad\overline{A}_{tot}(\bm{\tau}, \bm{a}^*)=\lambda_i(\bm{\tau},\bm{a}^*)\overline{A}_i'(\tau_i, a_i^*)= 0&\quad\text{and}\quad\overline{A}_{tot}(\bm{\tau}, \bm{a})=\lambda_i(\bm{\tau},\bm{a})\overline{A}_i'(\tau_i, a_i^*)< 0,
	\end{align*}
	where $\bm{a}^* = \left\langle a_1^*,\dots, a_n^*\right\rangle$ and $\bm{a} = \left\langle a_1,\dots, a_n\right\rangle$. Notably, these $\bm{a}^* $ forms 
	\begin{align}\label{eq:49}
	\overline{\bm{\mathcal{A}}}^*(\bm{\tau}) = \left\{\langle a_1, \dots, a_n \rangle \big| a_i \in \overline{\mathcal{A}}^*_i(\tau_i), \forall i \in \mathcal{N}\right\},
	\end{align}
	which is similar to Eq.~\eqref{eq:app-fact-3} in the proof of Fact \ref{fact:VDplexAdvContrain}. We construct $\widehat{Q}_{tot}= \overline{Q}_{tot}$ and $\left[\widehat{Q}_i\right]_{i=1}^n=\left[\overline{Q}_i\right]_{i=1}^n$. According to Eq.~\eqref{eq:49}, the constraints of advantage-based IGM stated in Fact \ref{fact:VDplexAdvContrain} (Eq.~\eqref{eq:two_dueling}, Eq.~\eqref{eq:two_dueling2}, and Eq.~\eqref{eq:consistency}) are satisfied, which means that $\left(\widehat{Q}_{tot}, \left[\widehat{Q}_i\right]_{i=1}^n\right)\in \widehat{\mathcal{Q}}$ and $\overline{\mathcal{Q}} \subseteq \widehat{\mathcal{Q}}$. 
	
	Thus, when assuming neural networks provide universal function approximation, the joint action-value function class that QPLEX can realize is equivalent to what is induced by the IGM principle. 
\end{proof}

\section{Experiment Settings and Implementation Details} \label{appendix:experiment_setting}

\subsection{StarCraft II}\label{sec:starcraft2}
We consider the combat scenario of StarCraft II unit micromanagement tasks, where the enemy units are controlled by the built-in AI, and each ally unit is controlled by the reinforcement learning agent. The units of the two groups can be asymmetric, but the units of each group should belong to the same race. At each timestep, every agent takes action from the discrete action space, which includes the following actions: no-op, move [direction], attack [enemy id], and stop. Under the control of these actions, agents move and attack in continuous maps. At each time step, MARL agents will get a global reward equal to the total damage done to enemy units. Killing each enemy unit and winning the combat will bring additional bonuses of 10 and 200, respectively. We briefly introduce the SMAC challenges of our paper in Table \ref{table:smac}. 

\begin{table}
	\centering
	\scalebox{0.9}{\begin{tabular}{ccc}
	\toprule 
	Map Name & Ally Units & Enemy Units \\ \hline
	2s3z &  2 Stalkers \& 3 Zealots &  2 Stalkers \& 3 Zealots  \\ 
	3s5z &  3 Stalkers \& 5 Zealots &  3 Stalkers \& 5 Zealots \\ 
	1c3s5z & 1 Colossus, 3 Stalkers \& 5 Zealots &  1 Colossus, 3 Stalkers \& 5 Zealots \\ \hline
	5m\_vs\_6m & 5 Marines & 6 Marines \\
	10m\_vs\_11m & 10 Marines & 11 Marines \\
	27m\_vs\_30m & 27 Marines & 30 Marines \\
	3s5z\_vs\_3s6z & 3 Stalkers \& 5 Zealots &  3 Stalkers \& 6 Zealots \\
	MMM2 & 1 Medivac, 2 Marauders \& 7 Marines & 1 Medivac, 2 Marauders \& 8 Marines \\ \hline
	2s\_vs\_1sc & 2 Stalkers & 1 Spine Crawler \\
	3s\_vs\_5z & 3 Stalkers & 5 Zealots \\
	6h\_vs\_8z & 6 Hydralisks & 8 Zealots \\
	bane\_vs\_bane & 20 Zerglings \& 4 Banelings & 20 Zerglings \& 4 Banelings \\
	2c\_vs\_64zg & 2 Colossi & 64 Zerglings \\ 
	corridor & 6 Zealots & 24 Zerglings \\ \hline
	5s10z & 5 Stalkers \& 10 Zealots &  5 Stalkers \& 10 Zealots \\ 
	7sz & 7 Stalkers \& 7 Zealots &  7 Stalkers \& 7 Zealots \\ 
	1c3s8z\_vs\_1c3s9z & 1 Colossus, 3 Stalkers \& 8 Zealots &  1 Colossus, 3 Stalkers \& 9 Zealots \\ \toprule
\end{tabular}}
	\caption{The StarCraft multi-Agent challenge \citep[SMAC;][]{samvelyan2019starcraft} benchmark.}\label{table:smac}
\end{table}

\subsection{Implementation Details}\label{sec:implmentation}

We adopt the PyMARL \citep{samvelyan2019starcraft} implementation of state-of-the-art baselines: QTRAN \citep{son2019qtran}, QMIX \citep{rashid2018qmix}, VDN \citep{sunehag2018value}, Qatten \citep{yang2020qatten}, and WQMIX \citep[OW-QMIX and CW-QMIX;][]{rashid2020weighted}. The hyper-parameters of these algorithms are the same as that in SMAC \citep{samvelyan2019starcraft} and referred in their source codes.
QPLEX is also based on PyMARL, whose special hyper-parameters are illustrated in Table \ref{table:configuration_QPLEX} and other common hyper-parameters are adopted by the default implementation of PyMARL \citep{samvelyan2019starcraft}. Especially in the online data collection, we take the advanced implementation of \textit{Transformation} of Qatten in QPLEX. To ensure the positivity of important weights of \textit{Transformation} and joint advantage function, we add a sufficiently small amount $\epsilon'=10^{-10}$ on $\left[w_i\right]_{i=1}^n$ and $\left[\lambda_i\right]_{i=1}^n$.
In addition, we stop gradients of local advantage function $A_i$ to increase the optimization stability of the max operator of dueling structure. This instability consideration due to max operator has been justified by Dueling DQN \citep{wang2015dueling}. We approximate the joint action-value function as
\begin{align*}
Q_{tot}(\bm{\tau}, \bm{a})\approx \sum_{i=1}^n Q_i(\bm{\tau}, a_i) +\sum_{i=1}^n \left(\lambda_i(\bm{\tau}, \bm{a})-1\right) \widetilde{A}_i(\bm{\tau}, a_i),
\end{align*}
where $\widetilde{A}_i$ denotes a variant of the local advantage function $A_i$ by stoping gradients. 

Our training time on an NVIDIA RTX 2080TI GPU of each task is about 6 hours to 20 hours, depending on the agent number and the episode length limit of each map.
The percentage of episodes in which MARL agents defeat all enemy units within the time limit is called \textit{test win rate}. We pause training every 10k timesteps and evaluate 32 episodes with decentralized greedy action selection to measure \textit{test win rate} of each algorithm. After training every 200 episodes, the target network will be updated once. We call this update period an \textit{Iteration} for didactic tasks. In the two-state MMDP, \textit{Optimal} line of Figure \ref{fig:mmdp_game_norm} is approximately $\sum_{i=0}^{99} \gamma^i = 63.4$ in one episode of 100 timesteps.

\begin{table}
	\centering
	\begin{tabular}{ccc}
	\toprule 
	QPLEX's architecuture configurations & Didactic Examples & StarCraft II \\ \hline
	The number of layers in $w, b, \lambda, \phi, \upsilon$ & 2 or 3 & 1 \\
	The number of heads in the attention module & 4 or 10 & 4 \\
	Unit number in middle layers of $w, b, \lambda, \phi, \upsilon$ & 64 & $\varnothing$ \\ 
	Activation in the middle layers of $w, \upsilon$ & Relu & $\varnothing$ \\
	Activation in the last layer  of $w, \upsilon$ & Absolute & Absolute \\
	Activation in the middle layers of $b$ & Relu & $\varnothing$ \\
	Activation in the last layer of $b$ & None & None \\
	Activation in the middle layers of $\lambda, \phi$ & Relu & $\varnothing$ \\
	Activation in the last layer of $\lambda, \phi$ & Sigmoid & Sigmoid \\
	\toprule
\end{tabular}
	\caption{The network configurations of QPLEX’s architecture.}\label{table:configuration_QPLEX}
\end{table}

\paragraph{Training with Online Data Collection} We have collected a total of 2 million timestep data for each task and test the model every 10 thousand steps. 
We use $\epsilon$-greedy exploration and a limited first-in-first-out (FIFO) replay buffer of size 5000 episodes, where $\epsilon$ is linearly annealed from 1.0 to 0.05 over 50k timesteps and keep it constant for the rest training process. To utilize the training buffer more efficiently, we perform gradient updates twice with a batch of 32 episodes after collecting every episode for each algorithm. 

\paragraph{Training with Offline Data Collection} To construct a diverse dataset, we train a behavior policy of QMIX \citep{rashid2018qmix} or VDN \citep{sunehag2018value} and collect its 20k or 50k experienced episodes throughout the training process. The dataset configurations are shown in Table \ref{table:offline}. We evaluate QPLEX and four baselines over six random seeds, which includes three different datasets and tests two seeds on each dataset. We train 300 epochs to demonstrate our learning performance, where each epoch trains 160k transitions with a batch of 32 episodes. Moreover, the training process of behavior policy is the same as that discussed in PyMARL \citep{samvelyan2019starcraft}.

\begin{table}
	\centering
	\begin{tabular}{cccc}
	\toprule 
	Map Name & Replay Buffer Size & Behaviour Test Win Rate & Behaviour Policy \\ \hline
	2s3z &  20k episodes & 95.8\% & QMIX \\ 
	3s5z &  20k episodes & 92.0\% & QMIX \\ 
	1c3s5z & 20k episodes & 90.2\% & QMIX \\
	2s\_vs\_1sc & 20k episodes & 98.1\% & QMIX \\
	3s\_vs\_5z & 20k episodes & 94.4\% & VDN \\
	2c\_vs\_64zg & 50k episodes & 80.9\% & QMIX \\ \toprule
\end{tabular}
	\caption{The dataset configurations of the offline data collection setting.}\label{table:offline}
\end{table}

\section{Omitted Figures and Tables in Section \ref{sec:exp-optimality} and \ref{sec:exp-stability}} \label{appendix:section_4}

\begin{table}[H]
	\centering
	\begin{subtable}[b]{0.32\linewidth}
		\centering
		\scalebox{0.9}{\begin{tabular}{|c|c|c|c|}
	\hline 
	\diagbox[dir=SE]{$a_2$}{$a_1$} &  $\mathcal{A}^{(1)}$ & $\mathcal{A}^{(2)}$ & $\mathcal{A}^{(3)}$ \\ \hline 
	$\mathcal{A}^{(1)}$ &  \textbf{8} &  -12 & -12  \\ \hline
	$\mathcal{A}^{(2)}$ &  -12 & 0 & 0 \\ \hline
	$\mathcal{A}^{(3)}$ &  -12 & 0 & 0 \\ \hline 
\end{tabular}}
		\caption{Payoff of matrix game}
		\label{table:matrix_game} \label{appendix:matrix_game}
	\end{subtable}
	\hfill
	\begin{subtable}[b]{0.32\linewidth}
		\centering
		\scalebox{0.88}{\begin{tabular}{|c|c|c|c|}
	\hline 
	\diagbox[dir=SE]{$a_2$}{$a_1$} &  $\mathcal{A}^{(1)}$ & $\mathcal{A}^{(2)}$ & $\mathcal{A}^{(3)}$ \\ \hline 
	$\mathcal{A}^{(1)}$ &  \textbf{8.0} &  -12.1 & -12.1  \\ \hline
	$\mathcal{A}^{(2)}$ &  -12.2 & -0.0 & -0.0 \\ \hline
	$\mathcal{A}^{(3)}$ &  -12.1 & -0.0 & -0.0 \\ \hline 
\end{tabular}}
		\caption{$Q_{tot}$ of QPLEX}
		\label{table:matrix_game_QPLEX}
	\end{subtable}
	\hfill
	\begin{subtable}[b]{0.32\linewidth}
		\centering
		\scalebox{0.88}{\begin{tabular}{|c|c|c|c|}
	\hline 
	\diagbox[dir=SE]{$a_2$}{$a_1$} &  $\mathcal{A}^{(1)}$ & $\mathcal{A}^{(2)}$ & $\mathcal{A}^{(3)}$ \\ \hline 
	$\mathcal{A}^{(1)}$ &  \textbf{8.0} & -12.0 & -12.0  \\ \hline
	$\mathcal{A}^{(2)}$ &  -12.0 & -0.0 & 0.0 \\ \hline
	$\mathcal{A}^{(3)}$ &  -12.0 & 0.0 & 0.0 \\ \hline 
\end{tabular}}
		\caption{$Q_{tot}$ of QTRAN}
		\label{table:matrix_game_QTRAN}
	\end{subtable}
	\vspace{0.2in}
	\newline
	\begin{subtable}[b]{0.32\linewidth}
		\centering
		\scalebox{0.88}{\begin{tabular}{|c|c|c|c|}
	\hline 
	\diagbox[dir=SE]{$a_2$}{$a_1$} &  $\mathcal{A}^{(1)}$ & $\mathcal{A}^{(2)}$ & $\mathcal{A}^{(3)}$ \\ \hline 
	$\mathcal{A}^{(1)}$ &  -8.0 &  -8.0 & -8.0  \\ \hline
	$\mathcal{A}^{(2)}$ &  -8.0 & -0.0 & \textbf{-0.0} \\ \hline
	$\mathcal{A}^{(3)}$ &  -8.0 & -0.0 & -0.0 \\ \hline 
\end{tabular}}
		\caption{$Q_{tot}$ of QMIX}
		\label{fig:matrix_game_QMIX}
	\end{subtable}
	\hfill
	\begin{subtable}[b]{0.32\linewidth}
		\centering
		\scalebox{0.88}{\begin{tabular}{|c|c|c|c|}
	\hline 
	\diagbox[dir=SE]{$a_2$}{$a_1$} &  $\mathcal{A}^{(1)}$ & $\mathcal{A}^{(2)}$ & $\mathcal{A}^{(3)}$ \\ \hline
	$\mathcal{A}^{(1)}$ &  -6.2 &  -4.9 & -4.9  \\ \hline
	$\mathcal{A}^{(2)}$ &  -4.9 & -3.6 & -3.6 \\ \hline
	$\mathcal{A}^{(3)}$ &  -4.9 & \textbf{-3.6} & -3.6 \\ \hline 
\end{tabular}}
		\caption{$Q_{tot}$ of VDN}
		\label{fig:matrix_game_VDN}
	\end{subtable}
	\hfill
	\begin{subtable}[b]{0.32\linewidth}
		\centering
		\scalebox{0.88}{\begin{tabular}{|c|c|c|c|}
	\hline 
	\diagbox[dir=SE]{$a_2$}{$a_1$} &  $\mathcal{A}^{(1)}$ & $\mathcal{A}^{(2)}$ & $\mathcal{A}^{(3)}$ \\ \hline 
	$\mathcal{A}^{(1)}$ &  -6.2 &  -4.9 & -4.9  \\ \hline
	$\mathcal{A}^{(2)}$ &  -4.9 & -3.5 & \textbf{-3.5} \\ \hline
	$\mathcal{A}^{(3)}$ &  -4.9 & -3.5 & -3.5 \\ \hline 
\end{tabular}}
		\caption{$Q_{tot}$ of Qatten}
		\label{fig:matrix_game_Qatten}
	\end{subtable}
	\caption{(a) Payoff matrix of the one-step game. Boldface means the optimal joint action selection from payoff matrix. (b-f) The joint action-value functions $Q_{tot}$ of QPLEX, QTRAN, QMIX, VDN, and Qatten. Boldface means greedy joint action selection from joint action-value functions.}
	\label{table:3}
\end{table}

\begin{figure}[H]
	\centering
	\begin{subfigure}{0.4\linewidth}
		\centering
		\includegraphics[width=0.8\linewidth]{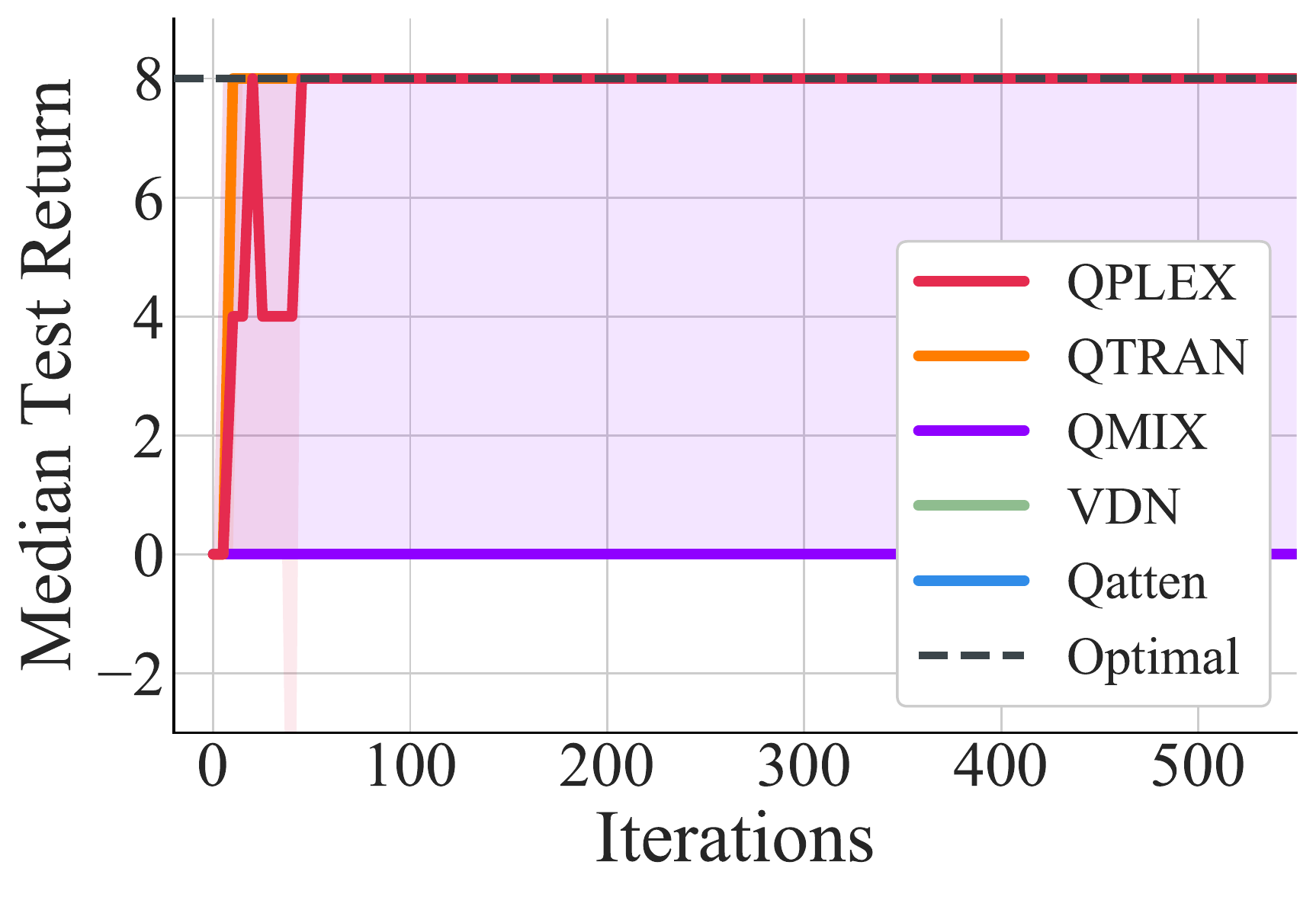}
	\end{subfigure}
	\caption{The learning curves of QPLEX and other baselines on the origin matrix game.}
\end{figure}

\newpage

\section{Experiments on StarCraft II with Online Data Collection}\label{appendix:sc2-online-deferred-figures}

\begin{figure}[H]
	\centering
	\includegraphics[width=1.0\linewidth]{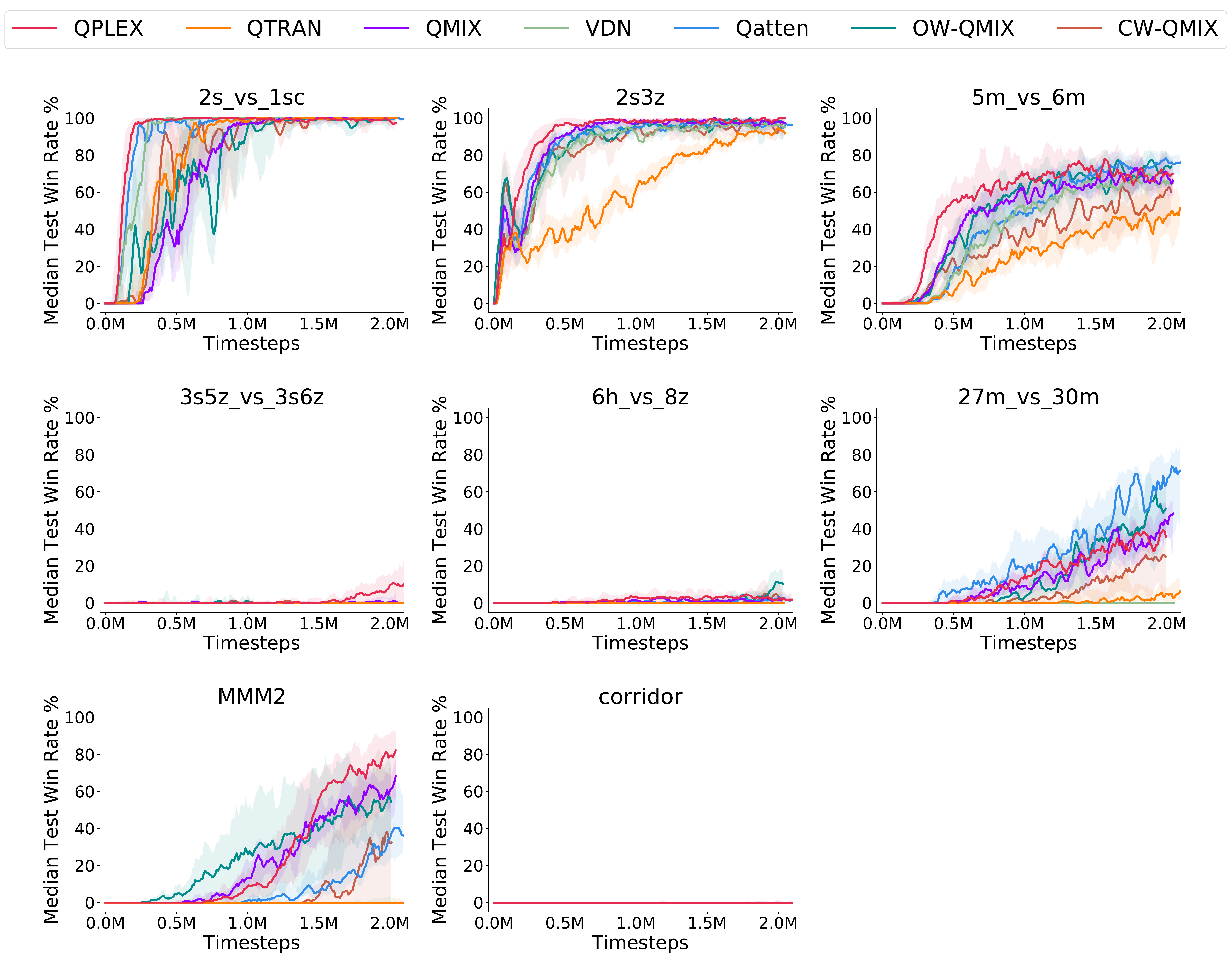}
	\caption{The learning curves of StarCraft II with online data collection on remaining scenarios.}\label{fig:sc2-online-deferred-figures}
\end{figure}

\begin{figure}[H]
	\centering
	\begin{subtable}[b]{0.35\linewidth}
		\centering
		\includegraphics[width=0.9\linewidth]{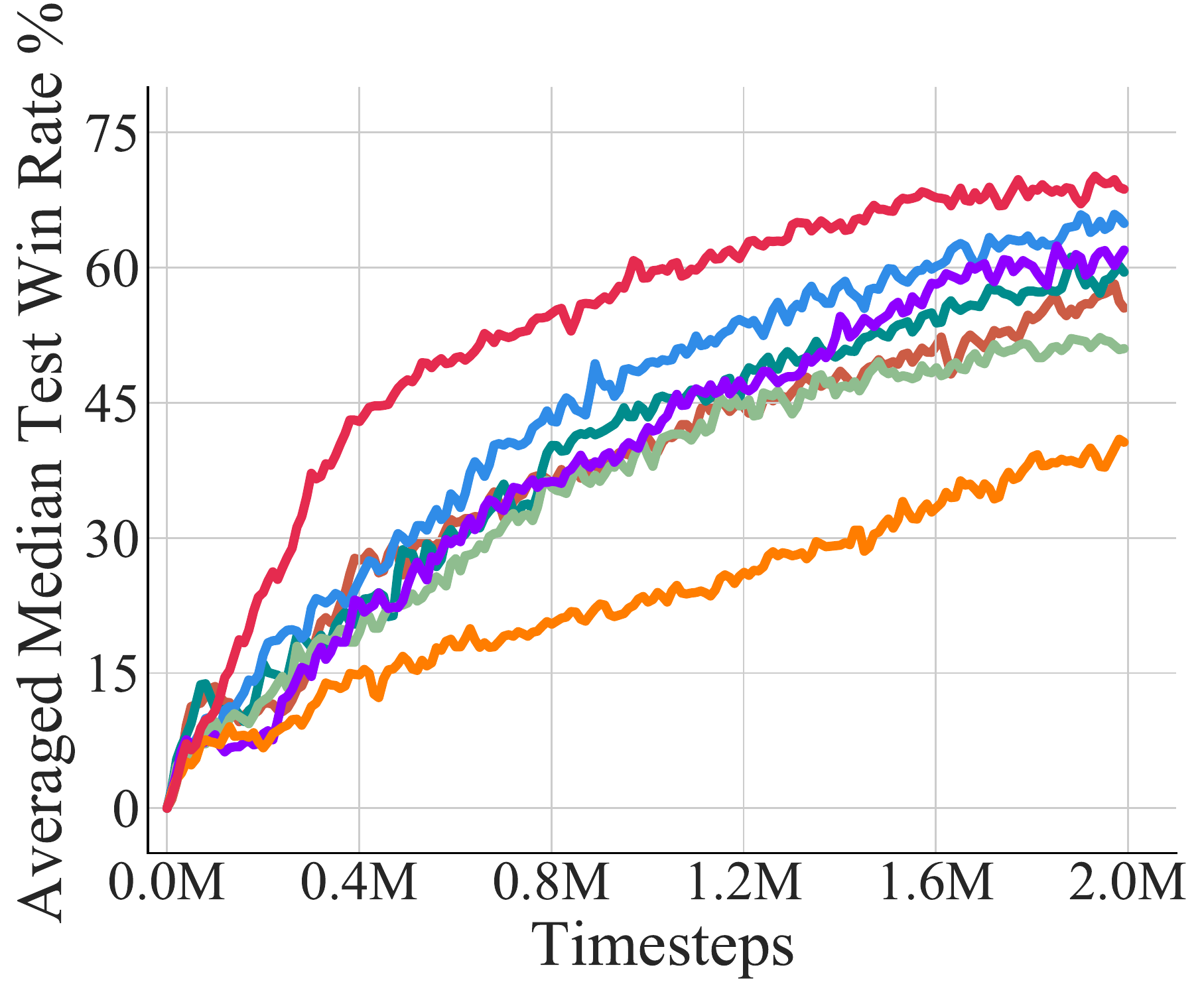}
		\caption{Averaged test win rate}
	\end{subtable}
	\begin{subtable}[b]{0.52\linewidth}
		\centering
		\includegraphics[width=0.9\linewidth]{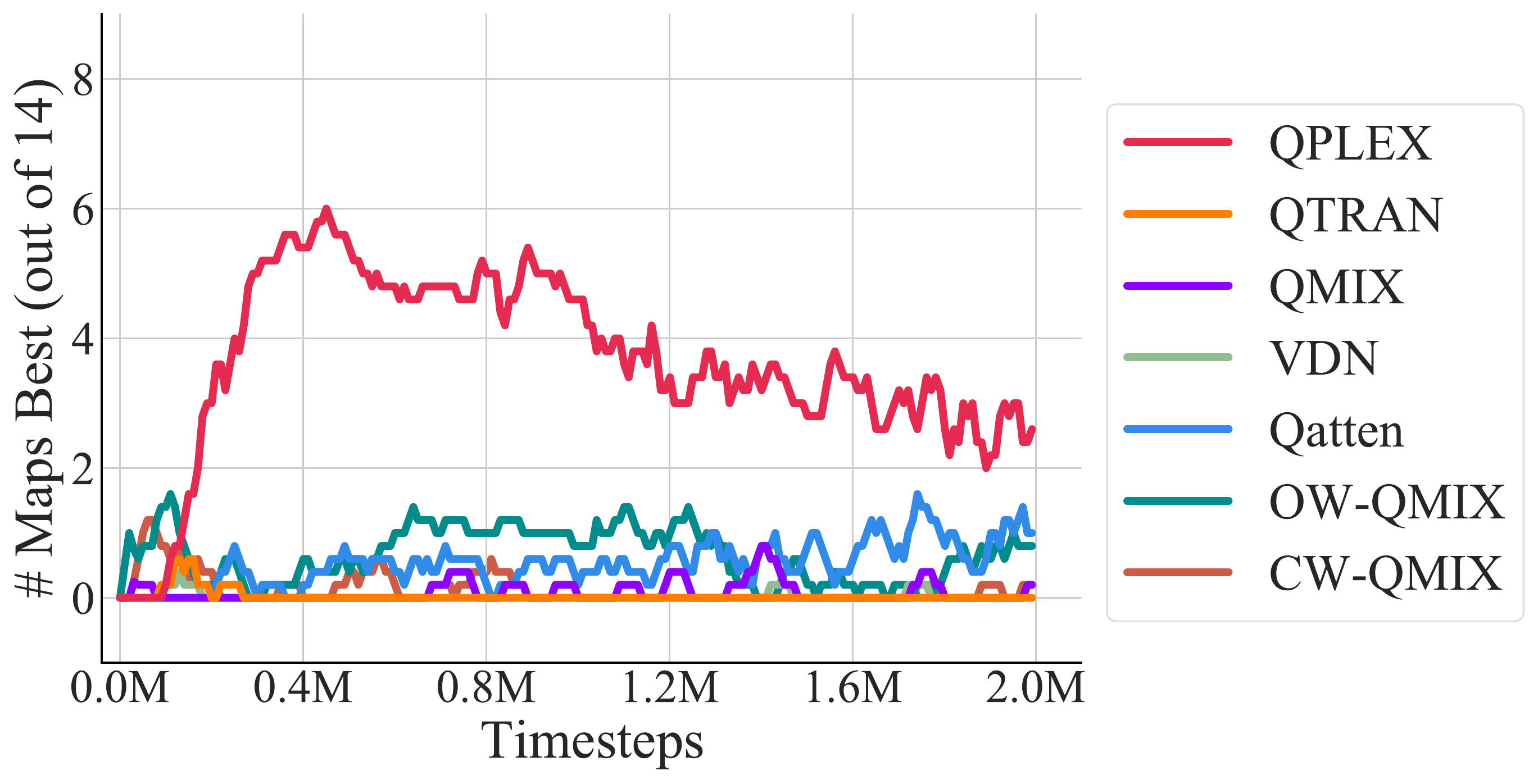}
		\caption{\# Maps best out of 14 scenarios}
	\end{subtable}
	\caption{(a) The median test win \%, averaged across all 14 scenarios proposed by SMAC \citep{samvelyan2019starcraft}. (b) The number of scenarios in which the algorithms' median test win \% is the highest by at least 1/32 (smoothed).}\label{fig:sc2-online-overall-14}
\end{figure}

\newpage

\section{Ablation Studies with Online Data Collection}\label{sec:ablation-study}

In this section, we conduct two ablation studies to investigate why QPLEX works, which includes: (i) QPLEX without the multi-head attention structure of dueling architecture, and (ii) QMIX with the same number of parameters as QPLEX. For these studies, Figure~\ref{fig:three-ablation-studies-online} plots the averaged median test win rate \% over the tasks of the StarCraft II benchmark mentioned in Section~\ref{sec:online}. Detailed learning curves on each task are shown in Figure \ref{appendix:wo-duel-atten} and \ref{appendix:large-qmix}, respectively.


Multi-head attention structure of importance weights $\lambda_i$ (see Eq.~\eqref{eq:Atot} and \eqref{eq:lambda}) allows QPLEX to adapt its scalable implementation to different scenarios, e.g., didactic games or StarCraft II benchmark tasks. Section \ref{sec:exp-optimality} demonstrates the importance of this attention structure, which can provide QPLEX more expressiveness of value factorization to perform better on the didactic matrix games. In this ablation study, we aim to test whether this multi-head attention is necessary for this StarCraft II domain. Specifically, we use a one-layer forward model instead of this multi-head attention structure in QPLEX, which is called \textit{QPLEX-wo-duel-atten}. Figure \ref{fig:ablation-wo-duel-atten} shows that QPLEX-wo-duel-atten achieves similar performance as QPLEX, which indicates that the the superiority of QPLEX over other MARL methods is largely due to the duplex dueling architecture (see Figure \ref{fig:framework}), instead of the multi-head attention trick.

\begin{figure}[H]
	\centering
	\vspace{-0.1in}
	\begin{subtable}[b]{0.35\linewidth}
		\centering
		\includegraphics[width=0.9\linewidth]{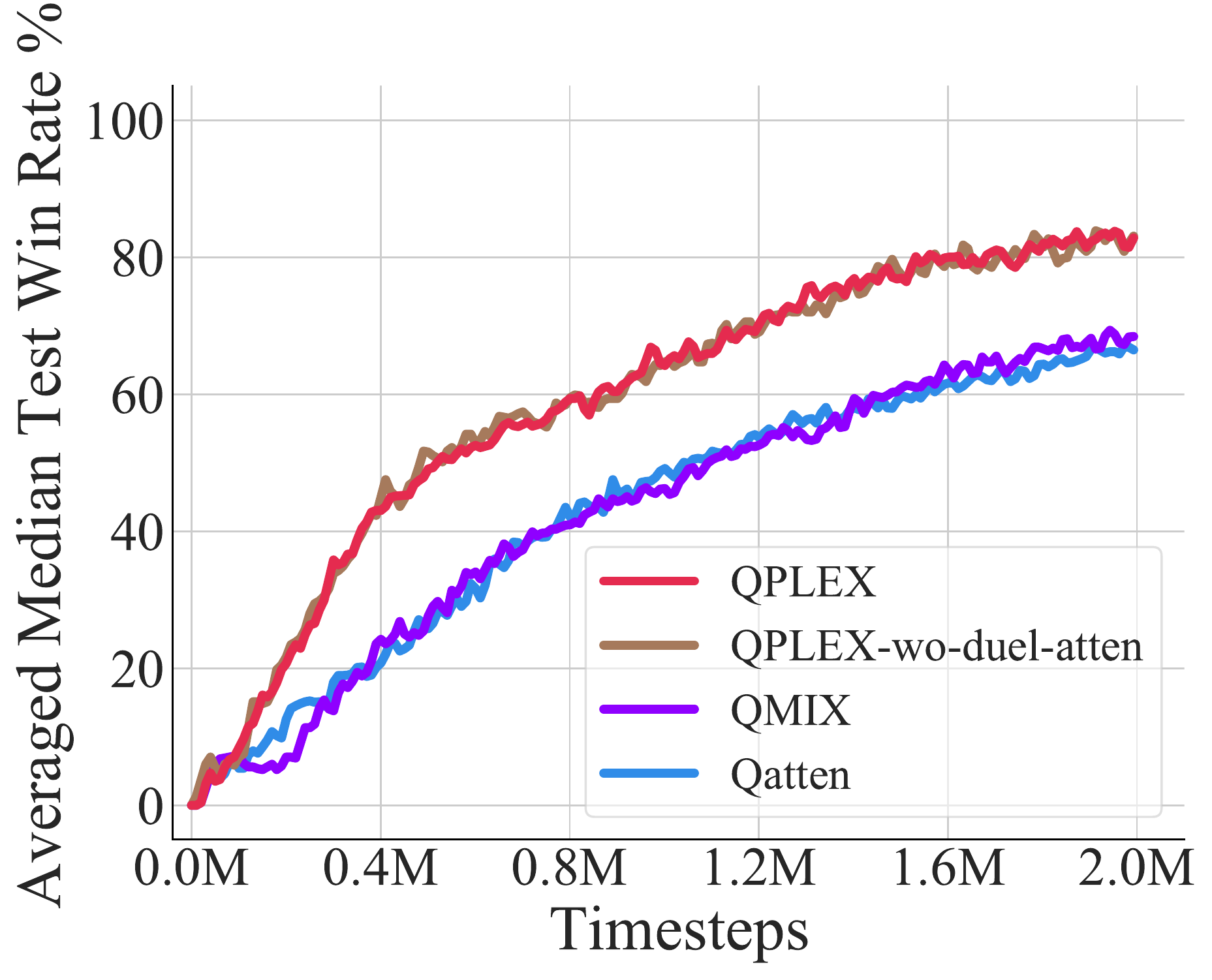}
		\caption{Without dueling attention}\label{fig:ablation-wo-duel-atten}
	\end{subtable}
	\hspace{0.2in}
	\begin{subtable}[b]{0.35\linewidth}
		\centering
		\includegraphics[width=0.9\linewidth]{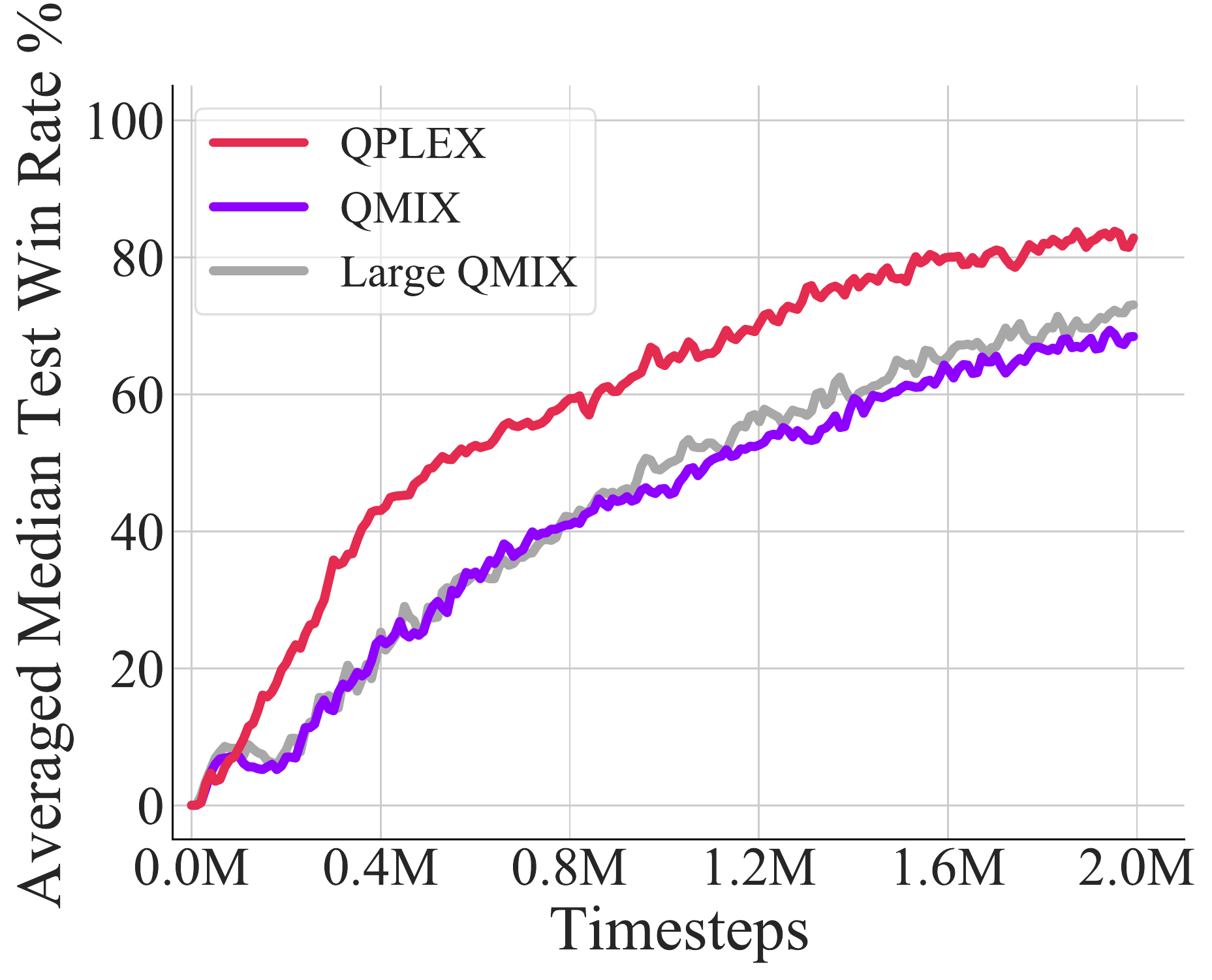}
		\caption{QMIX with similar \# neurons}\label{fig:ablation-large-qmix}
	\end{subtable}
	\caption{Ablation studies on QPLEX with the median test win \%, averaged benchmark scenarios.}\label{fig:three-ablation-studies-online}
	\vspace{-0.1in}
\end{figure}

QPLEX uses more parameters in the value factorization architecture of neural networks due to its multi-head attention structure. We introduce \textit{Large QMIX} with a similar number of parameters with QPLEX, to investigate whether the superiority of QPLEX over QMIX is due to the increase in the number of parameters. Figure \ref{fig:ablation-large-qmix} shows that QMIX with a larger network cannot fundamentally improve its performance, and QPLEX still significantly outperforms Large QMIX by a large margin. This result also confirms that the main effect of QPLEX comes from its novel value factorization structure (duplex dueling architecture) rather than the number of parameters.


\newpage


\begin{figure}[H]
	\centering
	\includegraphics[width=1.0\linewidth]{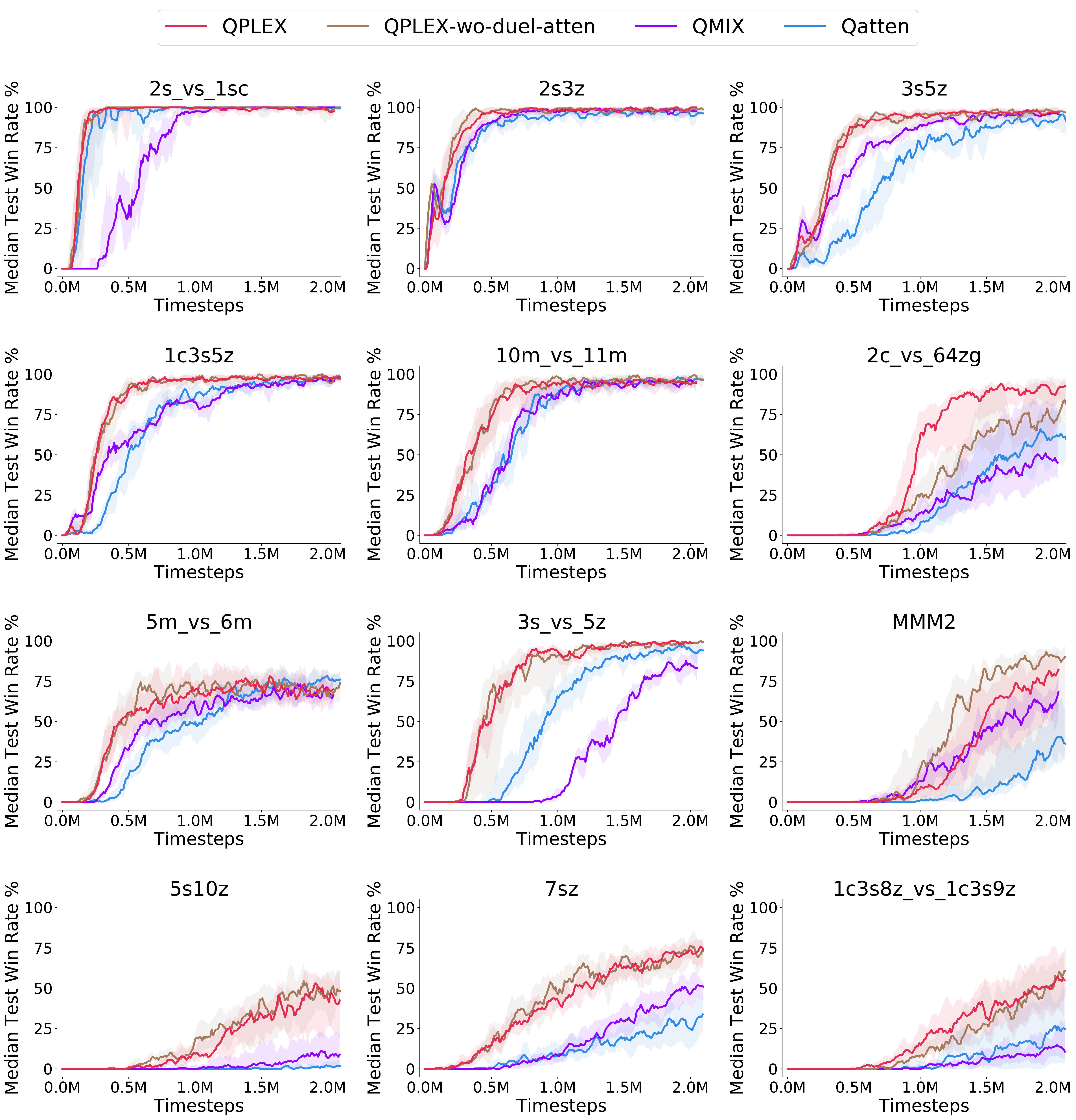}
	\caption{The learning curves of median test win rate \% for QPLEX, QPLEX's ablation QPLEX-wo-duel-atten, QMIX, and Qatten with online data collection.}\label{appendix:wo-duel-atten}
\end{figure}



\begin{figure}[H]
	\centering
	\includegraphics[width=1.0\linewidth]{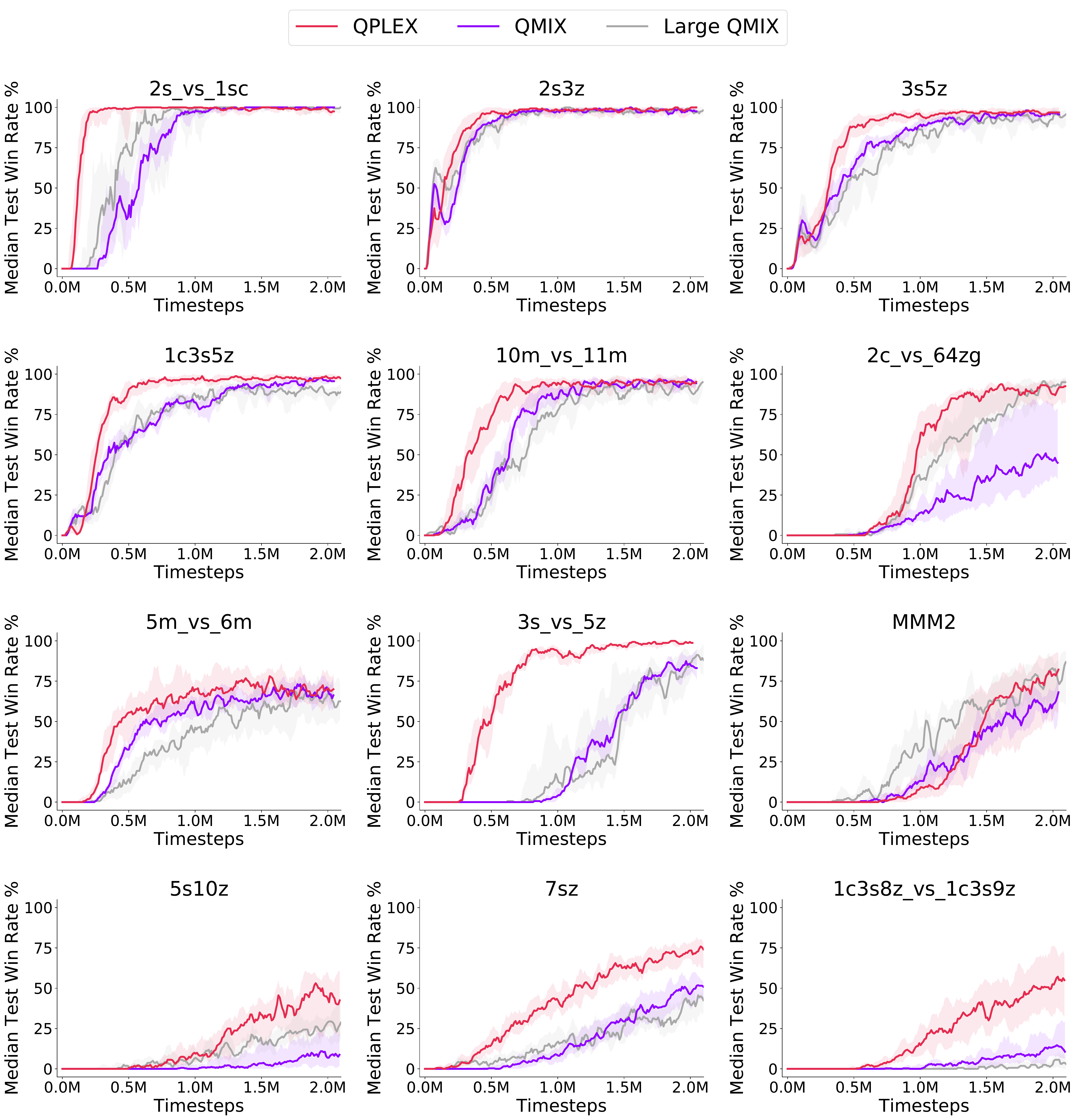}
	\caption{Learning curves of median test win rate \% for QPLEX, QMIX, and Large QMIX with online data collection.}\label{appendix:large-qmix}
\end{figure}

\newpage

\section{A Visualization of the Strategies Learned in 5s10z}\label{appendix:sc2-case-study}

\begin{figure}[H]
	\centering
	\begin{subtable}[b]{0.33\linewidth}
		\centering
		\includegraphics[width=0.85\linewidth]{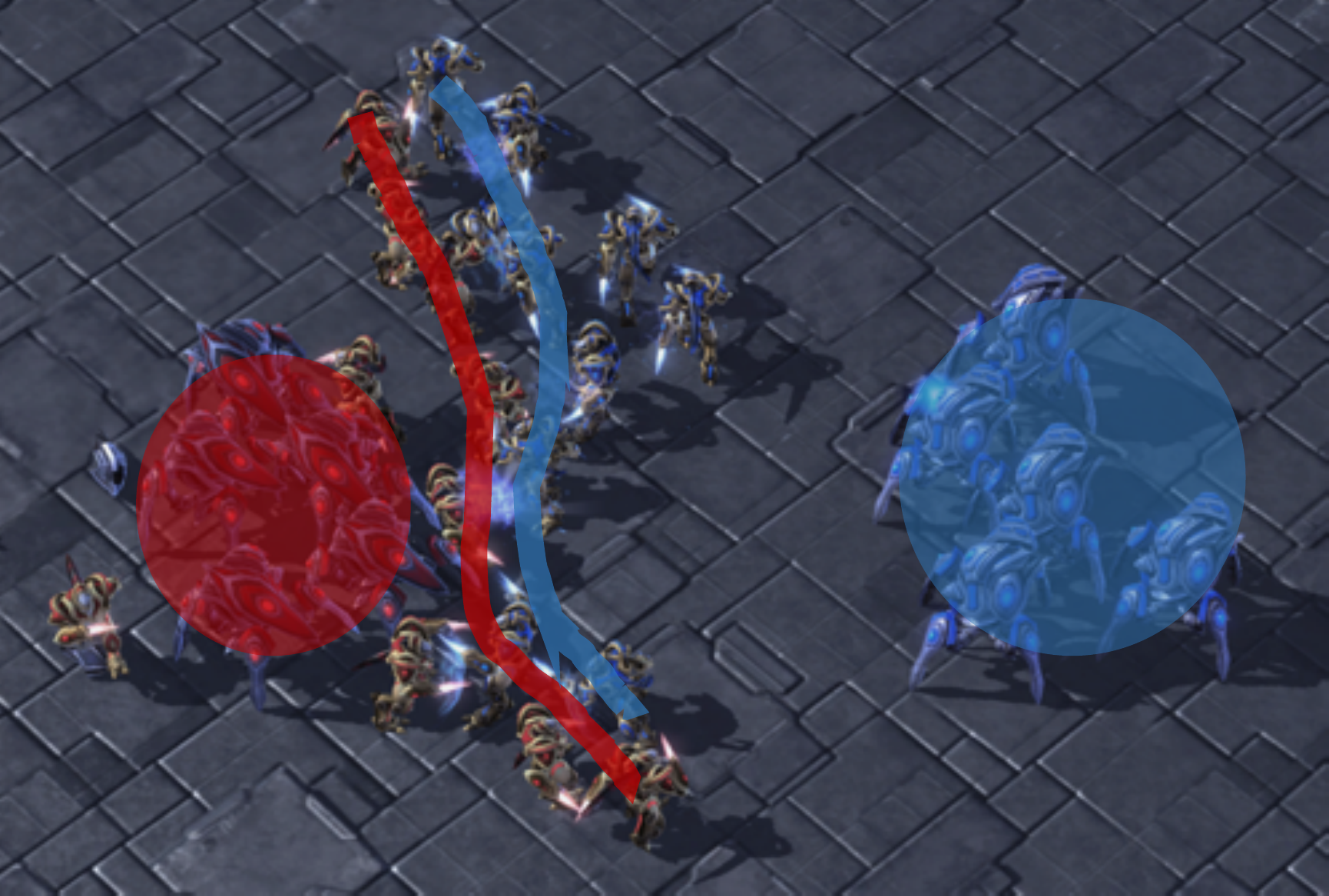}
		\caption{Strategy of QPLEX on 5s10z}
		\label{fig:qplex_5s10z_demo}
	\end{subtable}
	\hfill
	\begin{subtable}[b]{0.6\linewidth}
		\centering
		\includegraphics[width=0.85\linewidth]{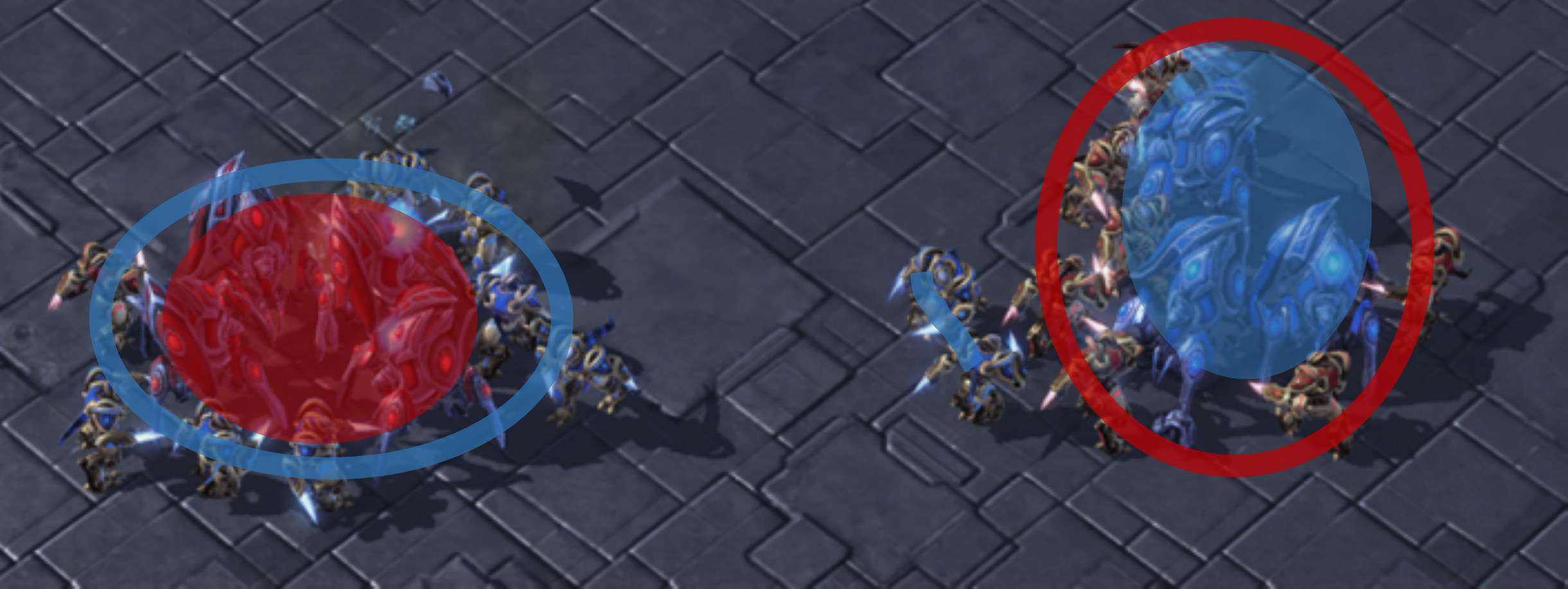}
		\caption{Strategy of QMIX on 5s10z}
		\label{fig:qmix_5s10z}
	\end{subtable}
	\caption{Visualized strategies of QPLEX and QMIX on 5s10z map of StarCraft II benchmark. Red marks represent learning agents, and blue marks represent build-in AI agents.}\label{appendix:demo}
\end{figure}

As shown in Figure \ref{appendix:demo}, both MARL agents and opponents contain 5 ranged soldiers (denoted by a circle) and 10 melee soldiers (denoted by line) on 5s10z map. The ranged soldiers have stronger combat capabilities and need to be protected strategically. QPLEX uses 10 melee soldiers to build lines of defense against the enemy, while QMIX fails to coordinate melee soldiers such that ranged soldiers have to fight against the enemy directly. 

\section{Experiments on StarCraft II with Offline Data Collection}\label{appendix:offline}

\begin{figure}[H]
	\centering
	\includegraphics[width=1.0\linewidth]{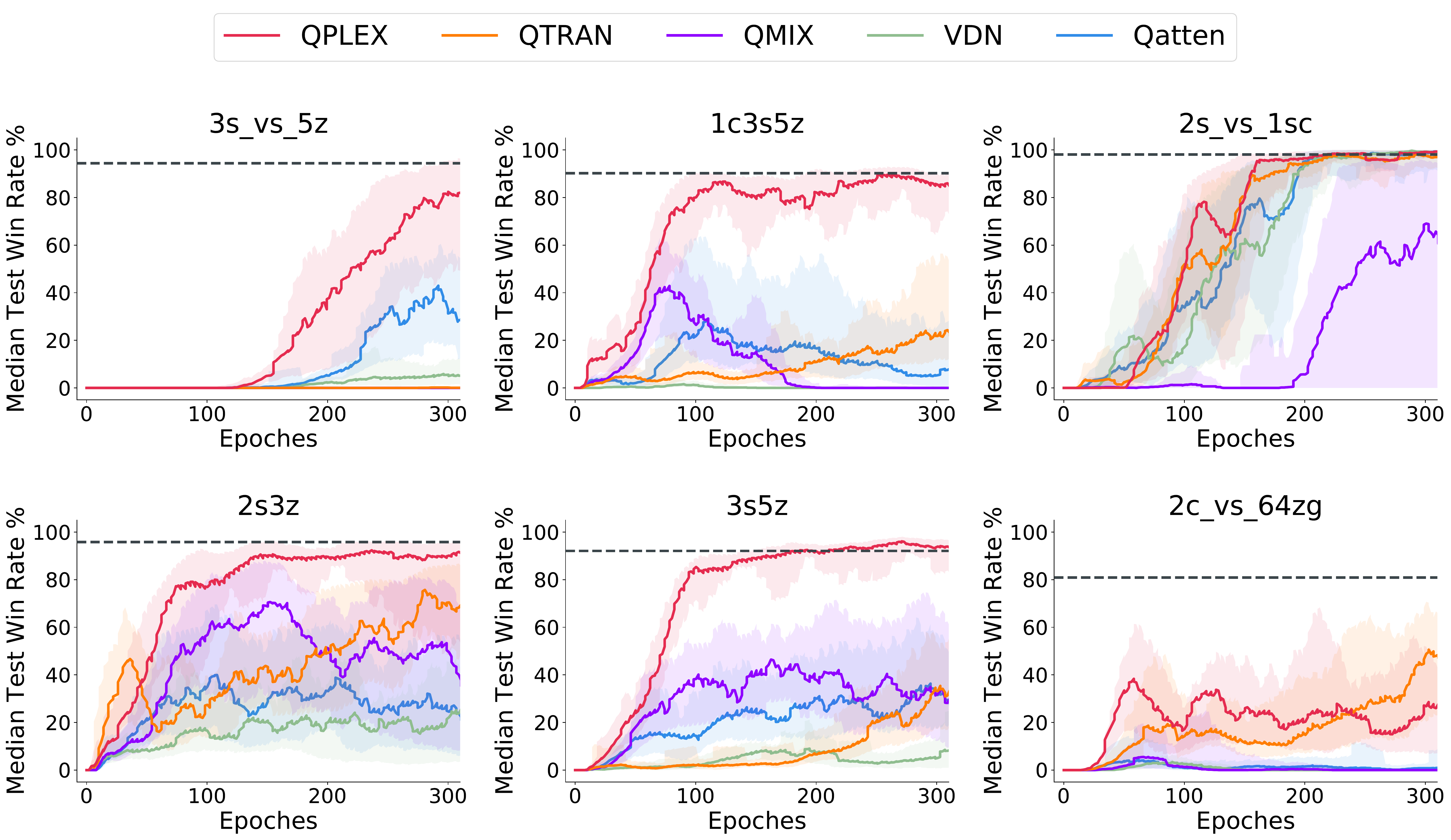}
	\caption{Deferred learning curves of StarCraft II with offline data collection on tested scenarios.}
	\label{fig:offline}
\end{figure}

\newpage

\section{Ablation Studies about QPLEX with Different Network Capacities in StarCraft II}

We trade off the expressiveness and learning efficiency of the multi-head attention network module for estimating importance weights $\lambda_i$ (see Eq.~\eqref{eq:Atot} and \eqref{eq:lambda} in Section \ref{sec:sub_q_plex}). It is generally sufficient for QPLEX to use a simple multi-head attention with a small number of heads and layers (as evaluated in StarCraft II tasks) to achieve the state-of-the-art performance shown in Figure \ref{fig:sc2-online-overall}. However, in some didactic corner-cases with an adequate and uniform dataset, a harder matrix game illustrated in Figure \ref{table:matrix_game2} requires very high precision in estimating the action-value function in order to differentiate the optimal solution from the sub-optimal solutions. For this matrix game, a multi-head attention structure with more layers and heads has a more representational capacity and results in better performance, as demonstrated by the ablation study illustrated in Figure \ref{fig:matrix_game2_lc_2} in Section \ref{sec:exp-optimality}.

\begin{wrapfigure}{R}{0.42\linewidth}
	\vspace{-0.15in}
	\centering
	\includegraphics[width=0.8\linewidth]{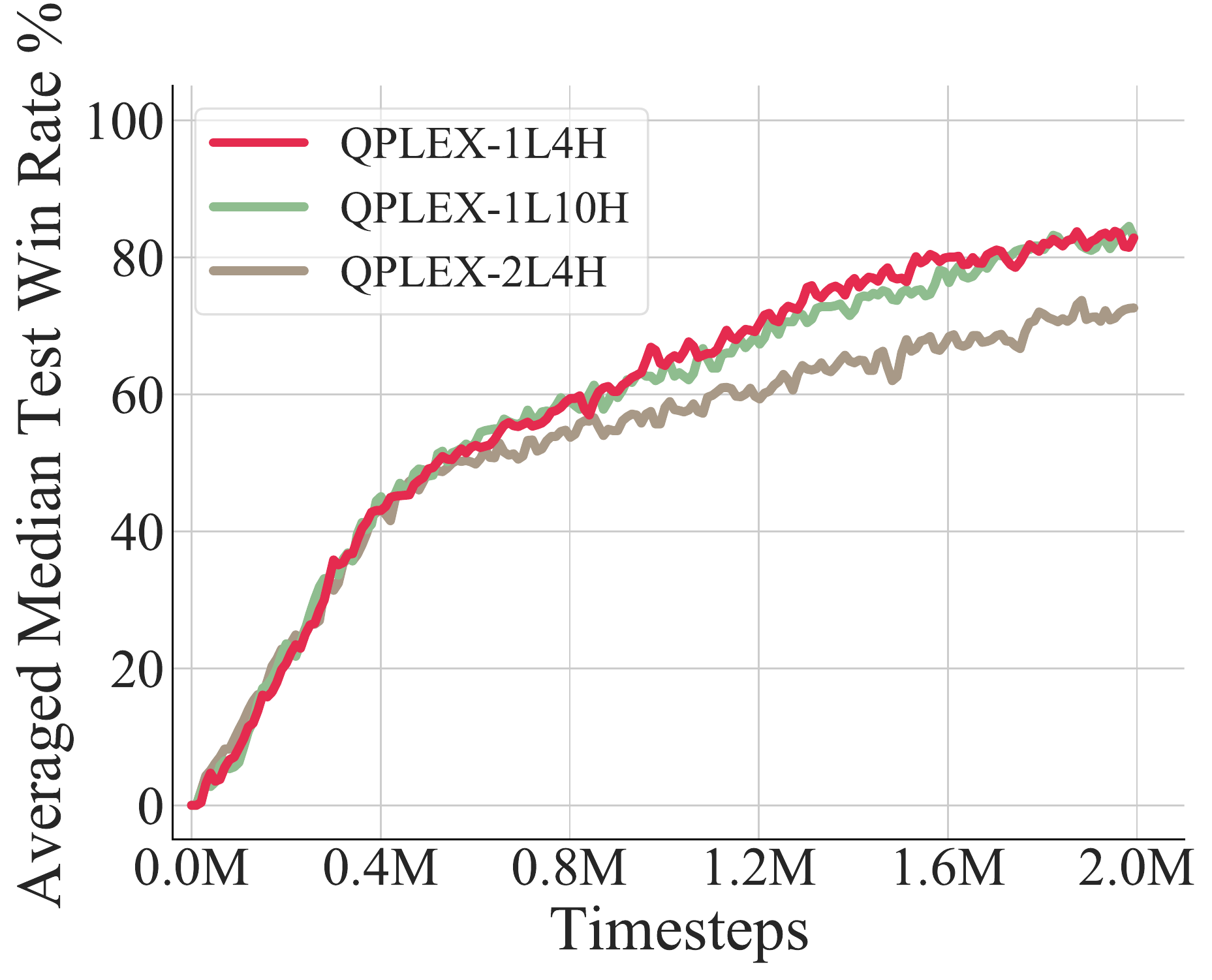}
	\caption{Ablation study about QPLEX with different network capacities in StarCraft II.}\label{fig:ablation-layers-heads}
	\vspace{-0.1in}
\end{wrapfigure}
In contrast, StarCraft II micromanagement benchmark tasks contain much more complicated agents with large state-action spaces and range from 2 to 27 agents. To support the superior training scalability of QPLEX, we used a multi-head attention with just one layer and four heads. To evaluate the effect of the choice of layer and the number of heads, we conducted an ablation study in Starcraft II benchmark tasks. For simplicity, we follow the notation of Figure \ref{fig:matrix-game}, i.e.,  use QPLEX-$a$L$b$H to denote QPLEX with $a$ layers and $b$ heads of importance weights~$\lambda_i$. As shown in Figure~\ref{fig:ablation-layers-heads}, using more heads, QPLEX-1L10H, does not change the performance, but using more layers, QPLEX-2L4H, may slightly degenerate the learning efficiency of QPLEX in this complex domain. Detailed learning curves on these tasks are shown in Figure \ref{fig:online-ablation-study-qplex-network-each}. This is because using more layers significantly increase the number of parameters and requires more samples for learning, which may offset the benefits of higher expressiveness it brings. This ablation study shows that a simple attention neural network with just one layer and four heads has enough expressiveness to handle these complex StarCraft II tasks.

\begin{figure}
	\centering
	\vspace{0.2in}
	\includegraphics[width=1.0\linewidth]{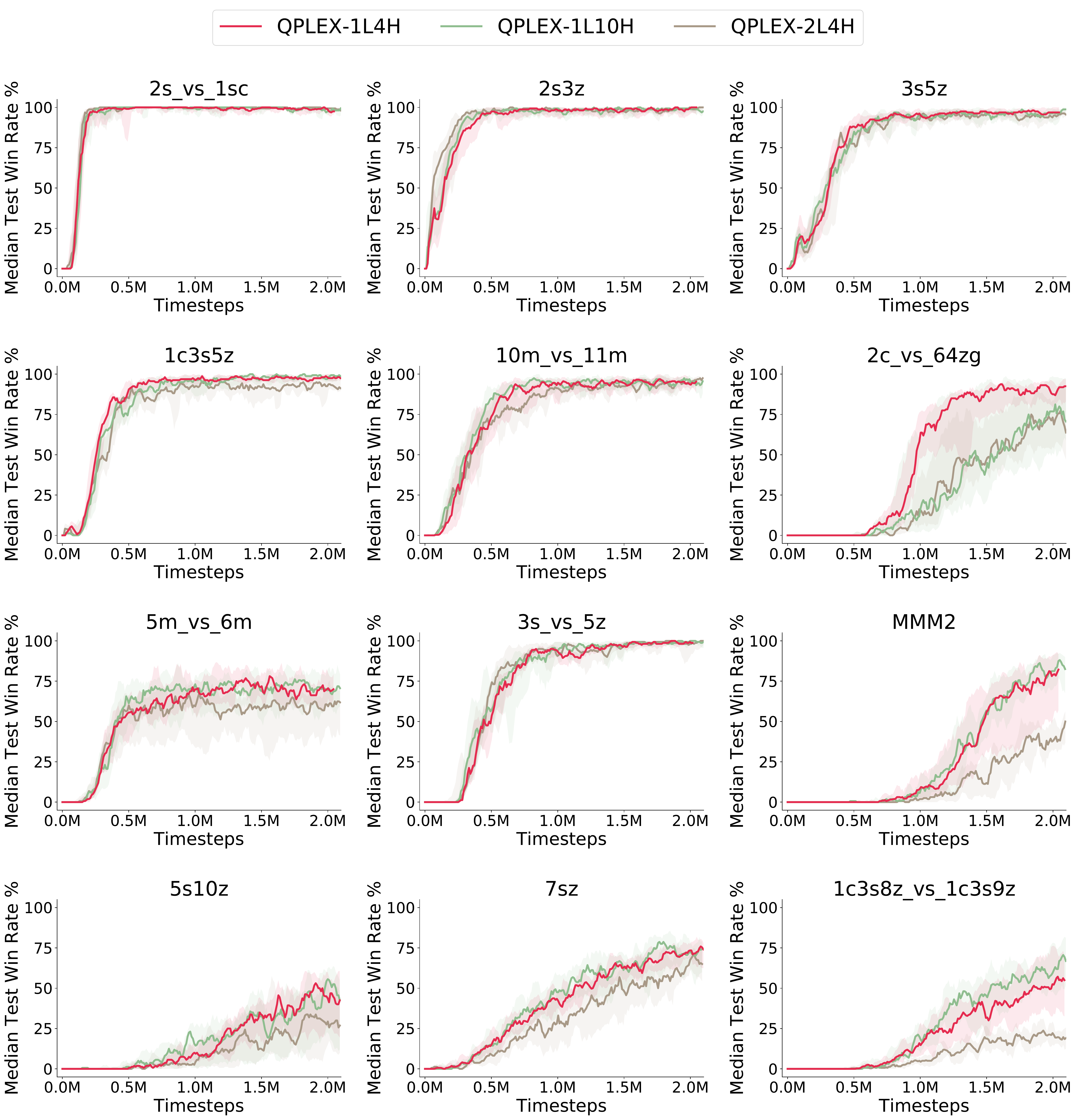}
	\caption{Deferred figures of median test win rate \% for QPLEX-1L4H, QPLEX-1L10H, and QPLEX-2L4H with online data collection.}\label{fig:online-ablation-study-qplex-network-each}
\end{figure}

\newpage

\section{Ablation Studies about QTRAN}

Both QPLEX and QTRAN aim to provide a richer factorized acton-value function class. The main difference between QPLEX and QTRAN is that QPLEX uses a duplex dueling architecture to realize the IGM principle (a hard constraint). In contrast, QTRAN uses two penalties as soft constraints to approximate the IGM principle. Moreover, from the perspective of implementation, QTRAN does not have a \textit{Transformation} module (see Section \ref{sec:sub_q_plex}) on the individual Q-functions and cannot utilize a multi-head attention module on the joint Q-function directly (because QTRAN does not take the duplex dueling architecture as QPLEX).

\begin{wrapfigure}{R}{0.42\linewidth}
	\vspace{-0.15in}
	\centering
	\includegraphics[width=0.8\linewidth]{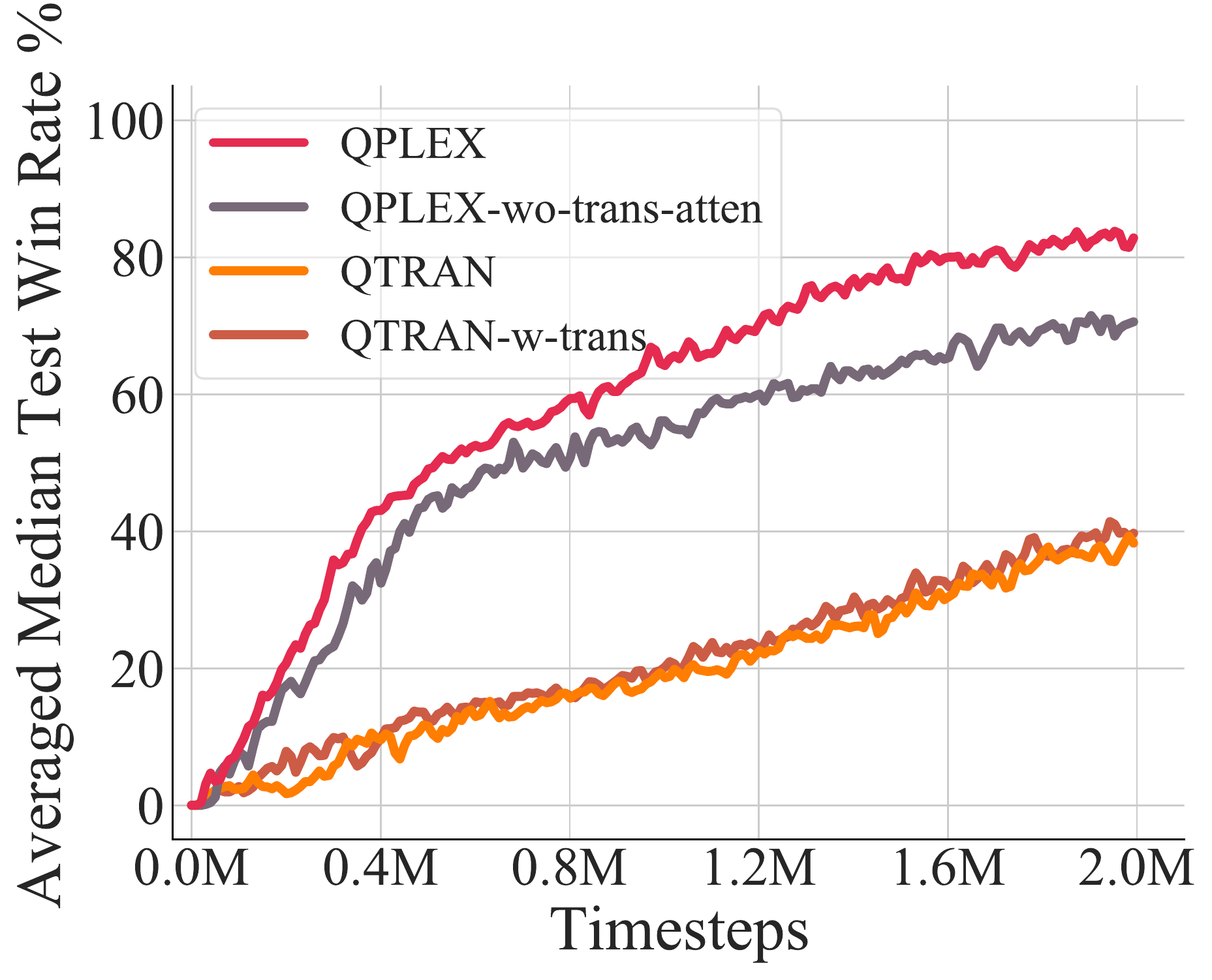}
	\caption{Ablation study about QTRAN with online data collection.}\label{fig:online-ablation-study-qtran-overall}
	\vspace{-0.15in}
\end{wrapfigure}
To test whether QPLEX outperforms QTRAN because of these factors, we conducted an ablation study by removing the \textit{Transformation} module and replacing the multi-head attention module with a simple one-layer forward model in the QPLEX's dueling architecture, which is denoted as \textit{QPLEX-wo-trans-atten}. In addition, we also introduce a variant of QTRAN, which also uses the \textit{Transformation} module for individual Q-functions, denoted as \emph{QTRAN-w-trans}. Our experiments are evaluated on the tasks of the StarCraft II benchmark mentioned in Section~\ref{sec:online}. 

Figure~\ref{fig:online-ablation-study-qtran-overall} illustrates the averaged median test win rate~\% over all tested scenarios. Detailed learning curves on these tasks are shown in Figure \ref{fig:online-ablation-study-qtran-each}. These empirical results show that QPLEX-wo-trans-atten significantly outperforms QTRAN and QTRAN-w-trans, which implies that the outperformance of QPLEX over QTRAN is largely due to its duplex dueling architecture. It can also be seen that QTRAN-w-trans cannot significantly improve the performance of QTRAN, which implies that QTRAN cannot benefit a lot from an extra \textit{Transformation} module empirically. 

As discussed in Section~\ref{sec:ablation-study}, Figure~\ref{fig:ablation-wo-duel-atten} can be regarded as an ablation study of \textit{QPLEX-wo-atten}, which just removes the multi-head attention module from dueling architecture. That ablation study shows that a simple neural network implementation of QPLEX's dueling structure has enough expressiveness to handle these StarCraft II tasks in the  online data collection setting, even if this structure can provide QPLEX with excellent performance in didactic matrix games using an adequate and uniform dataset (see Section~\ref{sec:exp-optimality}). Thus, compared with QPLEX-wo-atten in Figure \ref{fig:ablation-wo-duel-atten}, Figure~\ref{fig:online-ablation-study-qtran-overall} shows that \textit{Transformation} (abbreviated as \textit{-trans}) is a useful module for QPLEX empirically, which indicates an another QPLEX's advantage, i.e., QPLEX can equip a \textit{Transformation} module to improve QPLEX's empirical performance, whereas QTRAN may not by directly using it. Moreover, we would like to emphasize that, unlike the multi-head attention module, the \textit{Transformation} module is actually a necessary module for QPLEX to realize IGM, as shown in Figure~\ref{fig:framework} and by the proof of Proposition~\ref{QplexIsVDplex}. Therefore, it is actually fair to evaluate QPLEX with the \textit{Transformation} module. 


\begin{figure}
	\centering
	\includegraphics[width=1.0\linewidth]{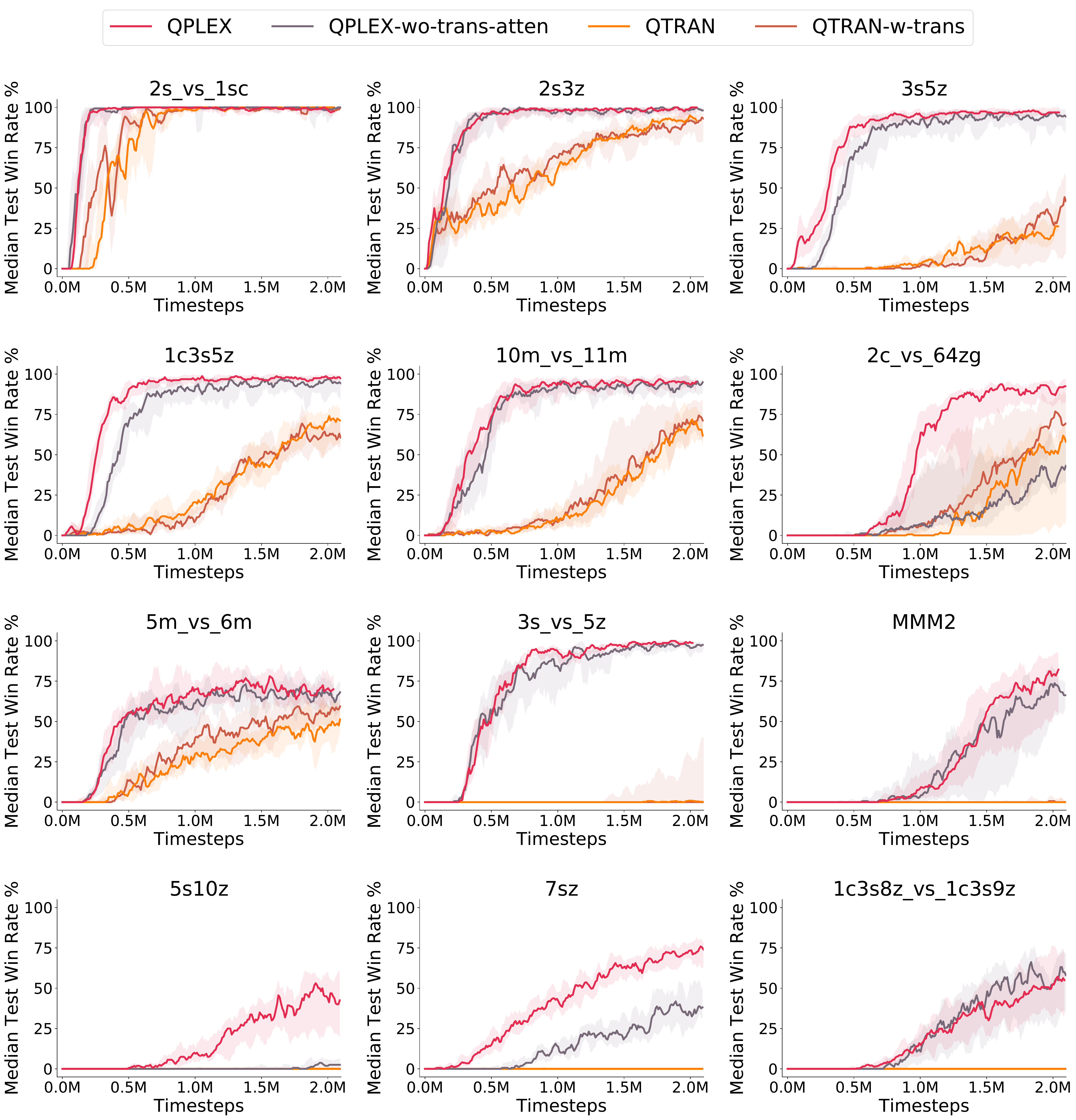}
	\caption{Figures of median test win rate \% for QPLEX, QPLEX's ablation QPLEX-wo-trans-atten, QTRAN, and QTRAN-w-trans with online data collection.}\label{fig:online-ablation-study-qtran-each}
\end{figure}

\newpage

\section{A Comparison to WQMIX in Predator-Prey}

WQMIX \citep{rashid2020weighted} is a recent advanced multi-agent Q-learning algorithm. We compare QPLEX with WQMIX in a toy game, \textit{predator-prey}, which is introduced by WQMIX and aims to test coordination between agents in a partially-observable setting.

\begin{figure}[H]
	\centering
	\includegraphics[width=0.4\linewidth]{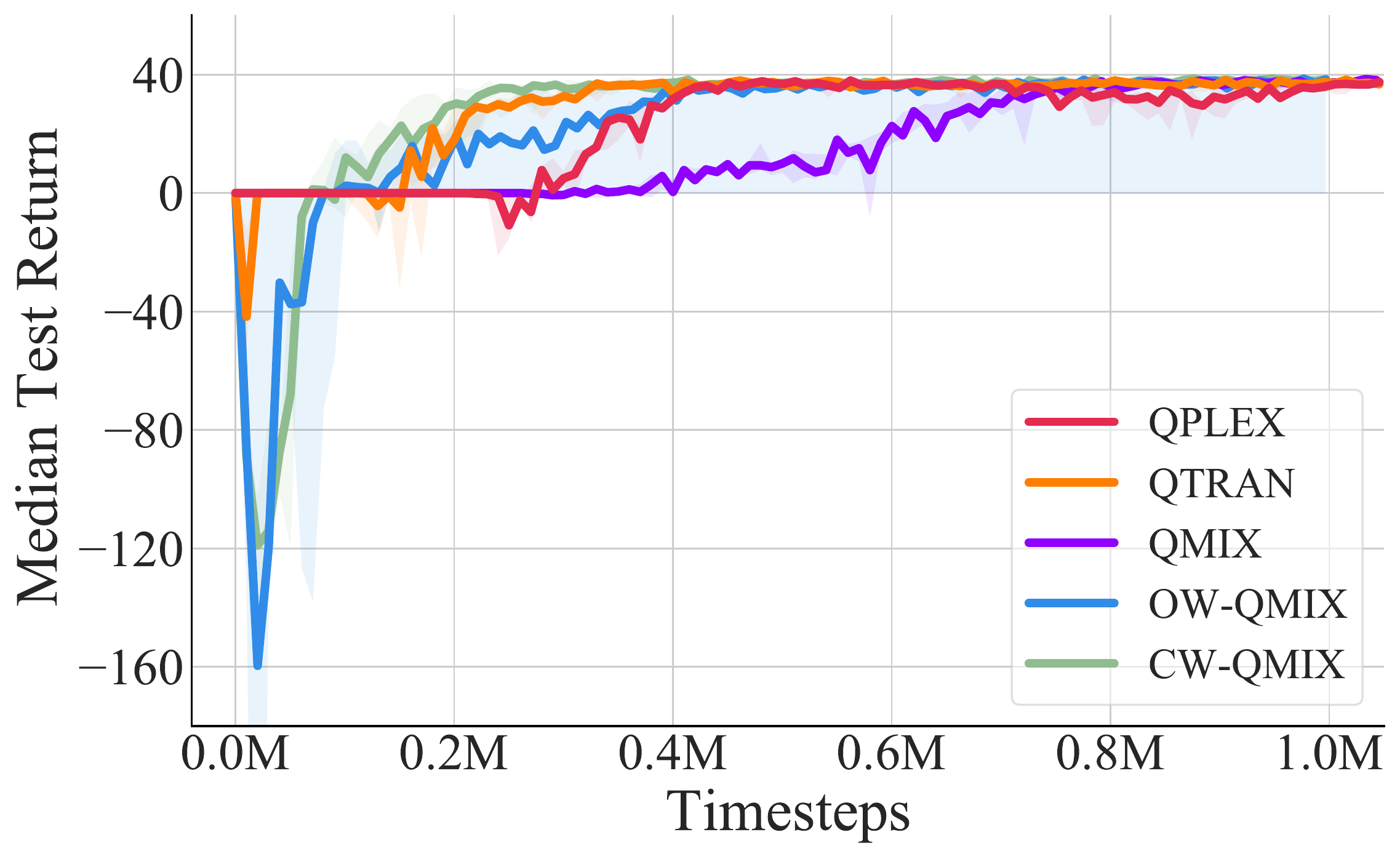}
	\caption{Learning curves of median test return for QPLEX, QTRAN, QMIX, and WQMIX (OW-QMIX and CW-QMIX) in the toy predator-prey task.}\label{fig:predator-and-prey}
\end{figure}

Predator-prey is a multi-agent coordinated game used by WQMIX \citep{rashid2020weighted} with miscoordination penalties. In order to collect the experience with the positive reward of agents' coordinated actions, extensive exploration can benefit multi-agent Q-learning algorithms to solve this kind of tasks. WQMIX shapes the data distribution with an importance weight to boost efficient learning, which can also be regarded as a type of biased exploration. To support QPLEX with effective exploration, we use an $\epsilon$-greedy strategy which is also discussed in the paper of WQMIX. This strategy's $\epsilon$ is linearly annealed from 1 to 0.05 over 1 million timesteps, increased from the 50k used in SMAC \citep{samvelyan2019starcraft}. As shown in Figure \ref{fig:predator-and-prey}, besides WQMIX and QTRAN, QPLEX can solve this task by using the introduced $\epsilon$-greedy exploration strategy. Moreover, QMIX can also solve this task by using the same $\epsilon$-greedy strategy as QPLEX, but QPLEX enjoys higher sample efficiency.



\end{document}